\documentclass{IEEEtaes}

\usepackage{color,array,amsthm}
\usepackage{amsmath,amsfonts,mathtools}
\usepackage{graphicx}
\usepackage{booktabs}
\usepackage[dvipsnames]{xcolor}
\usepackage{colortbl}
\usepackage{empheq}
\usepackage{cite}

\jvol{XX}
\jnum{XX}
\jmonth{XXXXX}
\paper{1234567}
\pubyear{2026}
\doiinfo{TAES.2020.Doi Number}

\newtheorem{theorem}{Theorem}
\newtheorem{proposition}{Proposition}
\newtheorem{remark}{Remark}

\newtheorem{assumption}{Assumption}
\newtheorem{problem}{Problem}

\let\b\mathbf
\let\bs\boldsymbol

\begin{document}

\title{\textit{QuadRocket}: An Aerial Robotic Testbed for Adaptive Thrust-Vector Control of Rocket-Like Vehicles}

\author{PEDRO SANTOS}
\member{Graduate Student Member, IEEE}
\affil{Instituto Superior Técnico, Universidade de Lisboa, Lisbon, Portugal} 

\author{JOEL REIS}
\affil{Faculty of Science and Technology, University of Macau,
	Taipa, Macao, China} 

\author{PAULO OLIVEIRA}
\member{Senior Member, IEEE}
\affil{Instituto Superior Técnico, Universidade de Lisboa, Lisbon, Portugal}

\author{CARLOS SILVESTRE}
\member{Senior Member, IEEE}
\affil{Faculty of Science and Technology, University of Macau,
	Taipa, Macao, China}

\receiveddate{This article has been accepted for publication in IEEE Transactions on Aerospace and Electronic Systems. This is the author's version which has not been fully edited and
	content may change prior to final publication. Citation information: DOI 10.1109/TAES.2026.3706328. \copyright 2026 IEEE. Personal use of this material is permitted. Permission from IEEE must be obtained for all other uses, in any current or future media, including reprinting/republishing this material for advertising or promotional purposes, creating new collective works, for resale or redistribution to servers or lists, or reuse of any copyrighted component of this work in other works\\%
This work was supported %
in part by the Macao Science and Technology Development Fund under Grants FDCT/0192/2023/RIA3 and FDCT/0059/2024/RIA1, %
in part by the University of Macau, under Projects MYRG2022-00205-FST, MYRG-GRG2023-00107-FST-UMDF, UMDF-TISF/2025/007/FST and SRG2024-00012-FST, %
and in part by Fundação para a Ciência e Tecnologia (FCT) through LAETA (UID/50022/2025) and LARSyS (DOI: 10.54499/LA/P/0083/2020). %
Pedro Santos holds a Ph.D. scholarship from FCT (2023.00268.BD). }

\corresp{{\itshape (Corresponding author: C. Silvestre)}.}

\authoraddress{P. Santos and P. Oliveira are with IDMEC and ISR, Instituto Superior Técnico, Universidade de Lisboa, Lisbon, Portugal (e-mail: \href{mailto:pedrodossantos31@tecnico.ulisboa.pt}{pedrodossantos}; \href{mailto:paulo.j.oliveira@tecnico.ulisboa.pt}{paulo.j.oliveira}). J. Reis and C. Silvestre are with the Department of Electrical and Computer Engineering of the Faculty of Science and Technology, University of Macau, Taipa, Macao, China (e-mail: \href{mailto:joelreis@um.edu.mo}{joelreis}; \href{mailto:csilvestre@um.edu.mo}{csilvestre}). C. Silvestre is on leave from Instituto Superior Técnico, Universidade de Lisboa, Lisboa, Portugal.}


\markboth{SANTOS ET AL.}{AN AERIAL ROBOTIC TESTBED FOR ADAPTIVE THRUST-VECTOR CONTROL}
\maketitle

\begin{abstract}This paper presents \textit{QuadRocket}, a quadrotor-based rocket prototype that provides a low-cost, low-risk platform for validating advanced thrust-vector control strategies for launch-vehicle–type systems. The prototype consists of a cylindrical main body mounted on top of a quadrotor through a universal joint, forming a flying inverted pendulum with non-negligible inertia. For control design, the coupled system is modeled as a single axisymmetric rigid body actuated by a vectored force applied along its longitudinal axis. A reduced-attitude representation on the two-sphere is adopted to explicitly exploit the vehicle’s axial symmetry and to decouple yaw from the thrust-vector direction. On this model, we derive an adaptive backstepping controller that achieves almost-global trajectory tracking in the presence of unknown constant disturbances, while a control-point transformation mitigates non-minimum-phase behavior. The quadrotor is then treated as a thrust-vector actuator, and a dynamic-surface-based attitude controller is designed to track the desired thrust-vector, accounting for actuation dynamics and avoiding explicit differentiation of virtual control signals. The complete architecture is evaluated in simulation and validated experimentally in an indoor motion-capture arena. Results demonstrate accurate trajectory tracking, effective disturbance compensation, and confirm the suitability of the \textit{QuadRocket} as a versatile testbed for thrust-vector-controlled robotic vehicles.
\end{abstract}

\begin{IEEEkeywords}Adaptive backstepping, aerial robotics, experimental robotics,  nonlinear control, thrust-vector control, underactuated systems.
\end{IEEEkeywords}

\section{INTRODUCTION}
T{\scshape esting} and validating control algorithms before deployment in highly complex autonomous vehicles is of utmost importance. This is especially true in the launch vehicle industry, where strict safety requirements and high operational costs impose that new technologies demonstrate a proven level of reliability \cite{Bennani2024}. At the same time, increasingly demanding mission scenarios require novel control algorithms capable of coping with a broader range of operating conditions and adapting to varying trajectory tracking requirements across different missions \cite{Simplicio2019}. Nonlinear and adaptive control strategies are natural candidates due to their intrinsic suitability for achieving global—or near-global—stability and tracking performance in the presence of unknown disturbances and parametric uncertainties \cite{Aguiar2007}. Because these algorithms are more complex and less established than classical control approaches, low-cost and low-risk test vehicles become essential tools for thoroughly verifying and validating their performance.
\begin{figure}[t]
	\centering
	\includegraphics[width=2.5in]{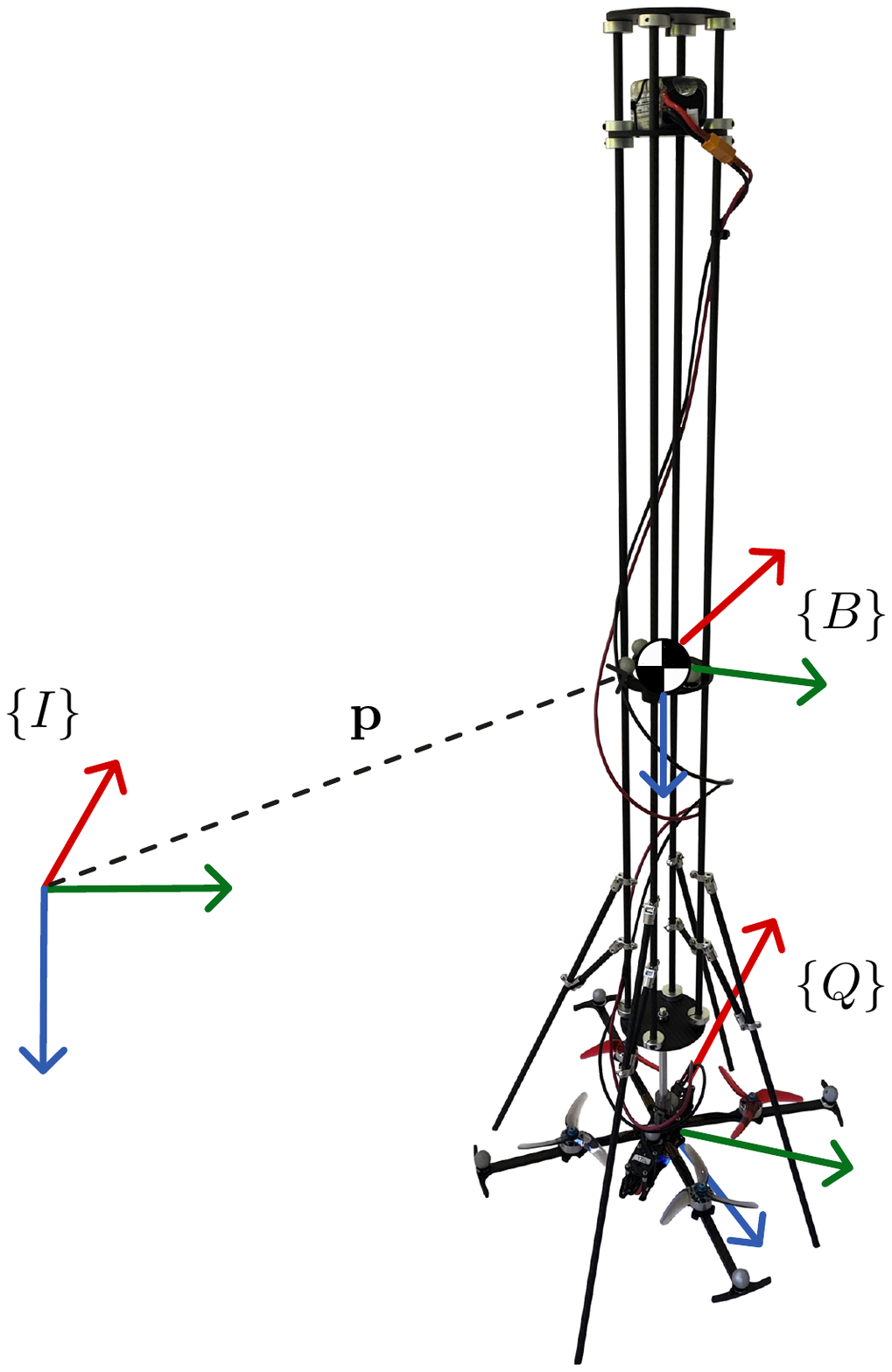}
	\caption{The \textit{QuadRocket}: representation of reference frames (red, green, and blue represent the $x$, $y$, and $z$ axes, respectively).}
	\label{fig:quadrocket}
\end{figure}

In this work, a novel platform to serve as low-cost testbed for thrust vectoring-based advanced control algorithms is presented and flight-tested. Leveraging the wide availability and cost-effective nature of quadrotor UAVs (Unmanned Aerial Vehicles) \cite{Sun2022}, together with their agility and extensive control-oriented research body, a cylindrical carbon fiber structure is designed and assembled on top of a quadrotor through an universal joint. The universal joint allows for unconstrained rotational motion between both bodies in two degrees of freedom, resulting in a flying inverted pendulum configuration with non-negligible inertia. This system is physically analogous to a rocket controlled via thrust vectoring in the sense that the net thrust force is applied along a direction that is not fixed to the main body axis, generating coupled translational and rotational dynamics through an off-center actuation point. As in rocket Thrust-Vector Control (TVC) systems, control authority is exercised through the regulation of the thrust-vector direction and magnitude, leading to similar control objectives and stability challenges. Besides its primary goal of serving as a low-cost and low-risk testbed for advanced thrust-vector control of rocket-like vehicles, we envision two additional applications which naturally emerge from the proposed aerial vehicle: i) autonomous cargo transportation, in which the cylindrical structure could carry a given payload; and ii) autonomous inspection of civil infrastructures, namely ceilings \cite{Kocer2019}, where the structure mounted on the quadrotor enables agile close-proximity operations.

 For control design purposes, we follow the rocket analogy and take the two interconnected systems as a single rigid body  with the inertial orientation of the carbon structure, actuated by a vectored force, which is given by the relative orientation and thrust magnitude of the quadrotor. In other words, the quadrotor is seen as a thrust-vector actuator to the main cylindrical body. Consequently, our control strategy focuses on two main tasks: i) to design a trajectory tracking controller for a generic axisymmetric rigid body actuated by a vectored force applied at a point on its longitudinal axis, below the center of mass; and ii) to design a control law for the angular velocity command of the quadrotor such that it tracks the desired thrust-vector provided by the flight controller.
 
  The axisymmetric rigid body actuated by a vectored force is an underactuated system with additional nonholonomic constraints arising from the coupling between force and torque inputs. This coupling can also lead to undesired non-minimum phase behavior \cite{Hauser1992}. The literature is scarce when it comes to integrated nonlinear trajectory tracking solutions for thrust-vector-controlled rockets, as position and attitude control loops are often designed separately, relying on linear approximations, and without formal stability guarantees for the complete cascaded system \cite{Wie2008}. Nevertheless, integrated approaches are standard for other underactuated autonomous vehicles, some of which sharing fundamental actuation mechanics with our platform. These include Lyapunov-based methods, most predominantly nonlinear backstepping, for thrust-vectored-controlled surface crafts \cite{Reis2023,Cabecinhas2018} and VTOL (Vertical Take-Off and Landing) ducted fan UAVs \cite{Pflimlin2004}, geometric control of quadrotor UAVs \cite{Lee2010,Gamagedara2019,Shi2017}, sliding mode control for missiles \cite{Shtessel2009}, as well as feedback linearization-based strategies \cite{Yang2013,Santos2025}. The reader is refered to \cite{Aguiar2007}, \cite{Casau20215} and the references therein for a deeper understanding on trajectory tracking control techniques for underactuated mechanical systems.

Our strategy leverages the tools of adaptive backstepping to derive a control law that defines the desired thrust-vector and ensures almost-global trajectory tracking in the presence of a constant exogenous disturbance. Moreover, borrowing from the literature on attitude control of underactuated rigid-bodies and geometric control on the sphere \cite{Gamagedara2019,Bullo1995}, we rely on a reduced attitude representation on the 2-sphere to fully exploit the axial symmetry of the vehicle and obtain almost-global attitude tracking with decoupled yaw motion. Additionally, the control point is moved from the center of mass to the center of oscillation in order to cancel small torque-to-force couplings and prevent non-minimum phase behavior, as originally detailed in \cite{Murray1995} and \cite{Nieuwstadt1995}. The backstepping technique and the reduced attitude representation are also used for quadrotor control to ensure that it tracks the desired thrust-vector.

\subsection{Related Work}

Other works have focused on developing low-cost and low-risk thrust vectoring testbeds to validate novel Guidance, Navigation \& Control (GNC) algorithms. Two independent projects from Switzerland, \cite{Spannagl2021}, \cite{Linsen2022}, have built and successfully tested thrust vectoring flying platforms relying on electric brushless motor-driven contra-rotating propellers attached to a gimbal mechanism. Both use Model Predictive Control (MPC) to solve the trajectory planning and tracking problem while optimizing actuation effort and satisfying constraints. Contrary to these works, in our platform thrust vectoring is achieved through multirotor attitude and force generation, avoiding gimbal-specific constraints such as servo angle/rate limits and backlash, and enabling rapid, repeatable experiments through readily available multirotor hardware. Moreover, while \cite{Spannagl2021,Linsen2022} primarily validate optimization-based GNC (optimal guidance and MPC/NMPC-type tracking), our paper instead focuses on a closed-form nonlinear adaptive controller targeting stability/robustness guarantees without requiring online optimization.   

As for quadrotor-based solutions, previous work from some of the authors has solved the trajectory tracking problem for the flying inverted pendulum via adaptive backstepping \cite{Yang2024}. In this case, the pendulum is taken as a point mass, with the quadrotor being the main contributor to the overall mass of the system. A different quadrotor-based scheme has been proposed in \cite{Elke2024}, where the primary goal is to accurately represent the dynamics of a launch vehicle. Both a flexible inverted pendulum and hanging mass are attached to the quadrotor to simulate aerodynamic instability and fuel sloshing. As opposed to our platform, the quadrotor is the main body and the inverted pendulum is used to replicate launch vehicle modes. Control design also differs, as a standard Linear Quadratic Regulator (LQR) is derived on a linear representation of the system to track attitude commands from an independent guidance loop.   

Focusing on the control strategy, nonlinear/geometric adaptive control techniques are standard when trying to enforce strict stability guarantees in the presence of constant disturbances \cite{Astolfi2008}. Relevant works which share similar design methodology can be found in \cite{Hua2009,Yang2024,Mofid2022,Reis2023b,Sun2015}. In our paper, we apply these techniques to an original model of a thrust-vector-controlled rocket-like vehicle, relying on a reduced-attitude representation on the 2-sphere. In doing so, we build on a considerable gap in the rocket trajectory control literature, where tracking solutions with (near) global performance are lacking \cite{Chai2021}.
	
Reduced-attitude representations have long been used in rigid-body attitude control, motivated by the need to regulate a body-fixed axis and by the presence of torque underactuation.     Seminal work can be found in \cite{Bullo1995}, \cite{Tsiotras1994}. More recently, these techniques have been exploited in the context of trajectory tracking with aerial vehicles for which the thrust direction governs position tracking independently of yaw motion \cite{Brescianini2020}, most predominantly quadrotor UAVs \cite{Gamagedara2019}, \cite{Kooijman2019,Coates2020,Madeiras2024,Montanez2024}. In our paper, we extend this methodology to a class of axissymetric aerial vehicles controlled by means of thrust vectoring. We rely on a reduced-attitude error that respects the geometrical properties of the 2-sphere, similar to the one employed in \cite{Gamagedara2019}, \cite{Montanez2024}, and propose a geometric tracking law derived through backstepping that ensures almost global asymptotic convergence. As opposed to previous works, to ensure stability of the closed-loop system we do not rely on time-scale separation \cite{Kooijman2019}, and do not restrict the admissible gains \cite{Gamagedara2019}, \cite{Montanez2024} or the reduced-attitude configuration space \cite{Madeiras2024}. This is achieved through a sequential backstepping design that leads to a single Lyapunov function encapsulating all tracking errors (position and attitude).

\subsection{Contributions}

Contributions of this work can be segmented into (i) platform design and experimental realization, and (ii) modeling and nonlinear control development:
\begin{enumerate}
	\item To the best of our knowledge, the \textit{QuadRocket} is the first quadrotor-based rocket prototype where the largest mass contributor is not the quadrotor but a body of non-negligible inertia attached to it through an universal joint;
	\item We demonstrate that a quadrotor can act as a thrust-vector actuator, enabling control of a flying inverted pendulum when modeled as a thrust-vector-controlled rocket;
	
\item An original model for the dynamics and kinematics of a generic axisymmetric rigid body controlled via thrust vectoring is derived by exploiting a reduced attitude representation on the 2-sphere and considering a control point transformation that cancels out non-minimum phase behavior.

 \item A novel almost-global trajectory tracking solution in the presence of a constant exogenous disturbance is obtained. For this class of vehicles, our approach is the first to merge position and attitude control in a single loop while using a reduced attitude error that respects the geometrical properties of the 2-sphere.

\end{enumerate}
\subsection{Notation}

Throughout this paper, the following notation applies. Bold lowercase and uppercase symbols stand for column vectors and matrices, respectively. The $n$-dimensional Euclidean space is represented by ${\mathbb{R}}^n$, with $\mathbb{S}^n$ denoting the set of unit vectors in $\mathbb{R}^{n+1}$. The space tangent to the $n$-sphere $\mathbb{S}^n$ at a point $\b{p} \in \mathbb{S}^n$ is denoted by $T_\b{p}\mathbb{S}^n$. The set of positive real numbers is denoted by $\mathbb{R}^+$, while $\mathbb{R}^+_0$ represents the set of non-negative real numbers. The symbol $\mathbf{I}$ denotes the identity matrix, and $\b{0}$ denotes a vector/matrix of zeros, both of appropriate dimensions. The transpose operator is denoted by $(\cdot)^\mathsf{T}$. The symbol $\bs{\otimes}$ stands for the Kronecker product while the direct sum is represented by $\bs{\oplus}$. Given a vector $\b{x} = \left[\:x_1\:\:x_2\:\dots\:x_n\:\right]^\mathsf{T}\in \mathbb{R}^n$, its Euclidean norm is defined as $||\b{x}|| \coloneq \sqrt{\b{x}^\mathsf{T}\bf{x}}$. A function $f$ is of class $\mathcal{C}^n$ if its derivatives $f'$, $f''$, ...,$f^{(n)}$ exist and are continuous.  The special orthogonal group of order three is defined as SO(3)$\coloneq\left\{\b{X}\in \mathbb{R}^{3\times3}: \b{X}\b{X}^\mathsf{T}=\b{X}^\mathsf{T} \b{X} = \b{I},\,\text{det}(\b{X})=1\right\}$. The operator $\b{S}(\cdot):\mathbb{R}^3\mapsto \mathbb{R}^{3\times3}$ yields a skew-symmetric matrix satisfying (the cross product) $\b{S}(\b{x})\b{y} =\b{x} \times \b{y} $, for any $\b{x},\,\b{y}\in \mathbb{R}^3$. The operator $\text{diag}(\b{x}):\mathbb{R}^n\mapsto \mathbb{R}^{n\times n}$ returns a diagonal matrix with the elements of $\b{x}$ along the main diagonal. The vectors $\b{e}_1,\,\b{e}_2,\,\b{e}_3 \in \mathbb{S}^2$ are orthonormal and form a basis of $\mathbb{R}^3$, such that the identity matrix can be expressed as ${\b{I}}=\left[{\b{e}}_1\;\b{e}_2\;\b{e}_3\right]$. 

\begin{figure}[t]
	\centering
	\includegraphics{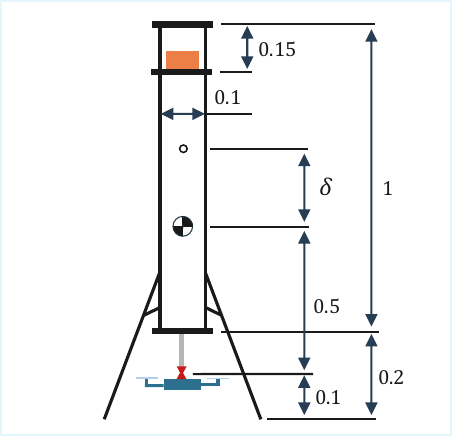} 
	\caption{\textit{QuadRocket} scheme. All dimensions in meters. The orange rectangle represents the battery, and the universal joint is shown in red.}
	\label{fig:quadrocket_scheme}
\end{figure}

\section{SYSTEM OVERVIEW}

The \textit{QuadRocket} platform was conceived as an aerial robotic testbed that is inexpensive, modular, and easy to reproduce, while still capturing the key dynamics of thrust-vector-controlled rocket-like vehicles. The design relies almost exclusively on off-the-shelf components and simple custom parts so that the platform can be replicated and modified in typical robotics laboratories. The system consists of a quadrotor UAV mechanically coupled to a cylindrical body through a universal joint, forming a flying inverted pendulum with non-negligible inertia, as illustrated in Fig.~\ref{fig:quadrocket_scheme}. This configuration preserves the essential coupling between thrust-vector direction and rigid-body attitude, enabling realistic evaluation of guidance and control strategies for thrust-vector-controlled vehicles. The cylindrical body is assembled from four independent carbon-fiber tubes with carbon-fiber disks attached to both ends, as shown in Fig.~\ref{fig:vehicle_parts}, each tube carrying a carbon-fiber leg so that the vehicle can stand upright on the ground. An additional carbon disk is positioned near the top of the structure to support the battery that powers the quadrotor. Positioning the battery at this location shifts the overall center of mass closer to the top, which increases the torque lever arm and, consequently, control authority.

An off-the-shelf aluminum universal joint is attached to the top of the quadrotor using a 3D printed PLA (Polylactic Acid) part (see Fig. \ref{fig:vehicle_parts}). To connect the carbon fiber cylindrical structure to the top part of the universal joint, an aluminum shaft was first designed and then manufactured by an external partner. The shaft has a threaded end so that it can be secured to the bottom carbon plate using a nut, while connection to the universal joint is guaranteed by friction, aided by a tightening shaft built into the joint. The universal joint allows for unconstrained rotational motion between both bodies in two degrees of freedom, corresponding to the gimbal angles, up to 40$\,^\circ$ of maximum tilt. A rotational constraint is imposed on the remaining degree of freedom by the universal joint, meaning that the quadrotor and the main body will have the same rotation on their individual axis of connection to the joint. The mass properties of the system are collected on Table \ref{tab:parameters}.
\begin{table}[h]
	\caption{System mass properties.}
	\label{tab:parameters}
	\centering
	\renewcommand{\arraystretch}{1.2} 
	\setlength{\tabcolsep}{8pt} 
	\begin{tabular}{>{\columncolor{gray!8}}c c c}
		\rowcolor{gray!20}
		\textbf{Component} & \textbf{Mass (kg)} & \textbf{\%Total}\\
		\hline
		Quadrotor&  0.45 & 29.0\\
		Carbon fiber structure & 0.64 & 41.3\\
		Universal joint and shaft & 0.11 & 7.1\\
		Battery and power cables & 0.35 & 22.6\\
		\hline
		Fully assembled vehicle & 1.55 & 100\\
		\hline
	\end{tabular}
\end{table}

\subsection{Quadrotor}

\begin{figure}[t]
	\centering
	\includegraphics[width=\columnwidth]{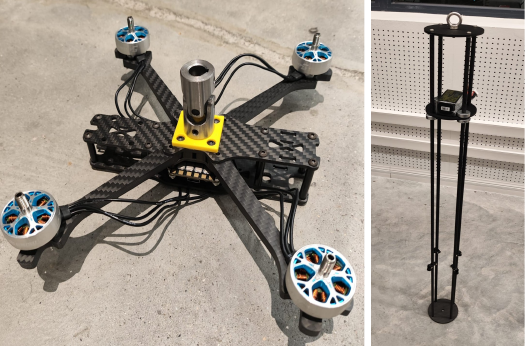} 
	\caption{Quadrotor with mounted universal joint and main body.}
	\label{fig:vehicle_parts}
\end{figure}

The quadrotor had been previously designed and assembled at the SCORE lab of the University of Macau and is of the FPV racing type (see Fig. \ref{fig:vehicle_parts}). Its lightweight structure is made out of carbon fiber plates connected through aluminum spacers and carbon fiber arms to support each of its four brushless motors. The BetaFPV Snow 2306 2500KV motors were used with 5 inch propellers, producing a maximum combined thrust of 55$\,$N at a current of 160$\,$A. To power the drone, a 4-cell LiPo battery is used with a discharge rate of 120$\,$C and a nominal capacity of 2000$\,$mA.h. As for the flight computer, a BrainFPV RADIX 2 HD Flight Controller is used with BetaFlight 4.5.1 flight software running on it. For our implementation, the flight software is configured to manual mode such that the quadrotor tracks thrust and angular velocity commands coming from an RC communication link. This RC communication link will be directly commanded by the control algorithm to be derived.

\section{PROBLEM FORMULATION}

The \textit{QuadRocket} consists of two interconnected rigid bodies linked by a universal joint. Directly modeling and controlling this coupled multibody system with joint constraints leads to a high-dimensional and cumbersome representation that obscures the main control challenges. To simplify the flight-stabilization and control problem, we adopt an abstraction in which the vehicle is modeled as a single axisymmetric rigid body actuated by a vectored force applied along its longitudinal axis, i.e., the quadrotor is treated as a thrust-vector actuator:
\begin{assumption}\label{ass:single_body}
	The \textit{QuadRocket} is an axisymmetric rigid body actuated by a vectored force applied along its longitudinal axis. The magnitude of the control force is given by the thrust produced by the quadrotor, and its direction is determined by the relative orientation between the quadrotor and the cylindrical structure. 
\end{assumption}
Under Assumption \ref{ass:single_body}, the cylindrical body defines the inertial position, velocity, and orientation of the vehicle, while the abstracted rigid body has the combined mass and approximate inertia properties of both physical bodies. 
\begin{assumption}\label{ass:quad_omega}
The angular velocity of the quadrotor is regulated by a sufficiently fast inner-loop controller. Consequently, it can be treated as a control input.
\end{assumption}

Assumptions \ref{ass:single_body} and \ref{ass:quad_omega} allow us to decompose the control problem into two complementary tasks:
\begin{enumerate}
	\item Design a flight controller for a generic axisymmetric rigid body actuated by a vectored force, applied on its longitudinal axis below the center of mass;
	\item Design an attitude controller to command the angular velocity of the quadrotor such that it tracks the desired thrust-vector input computed by the flight controller.
\end{enumerate}

\subsection{\textit{QuadRocket} Dynamic Modelling} \label{sec:dyn_model}

Following Assumption \ref{ass:single_body}, the \textit{QuadRocket} can be seen as an underactuated vehicle with constant mass, $m \in \mathbb R^{+}$, while its inertia tensor is the positive definite matrix given by $\b{J}\in \mathbb{R}^{3\times3}$. Let us introduce three reference frames: an inertial one denoted by \{$I$\}, a body-fixed one denoted by \{$B$\}, whose origin coincides with the center of mass of the vehicle, and a third one fixed to the quadrotor denoted by \{$Q$\}. Figure \ref{fig:quadrocket} depicts the reference frames.

Let $\b{p}, \b{v} \in \mathbb{R}^3$ respectively stand for the inertial position and velocity of the center of mass, $\b{R}\in$ SO(3) be the rotation matrix from \{$B$\} to \{$I$\}, and $\bs{\omega}\in\mathbb{R}^3$ be the angular velocity of $\{B\}$ with respect to $\{I\}$, expressed in $\{B\}$. Then, the kinematics of the \textit{QuadRocket} are given by
\begin{subequations}\label{eq:kinematics}
	\begin{empheq}[left=\empheqlbrace]{align}
		\dot{\b{p}} &= \b{v}\,,\\[4pt]
		\dot{\b{R}} &= \mathbf{R}\b{S}(\bs\omega)\,.
	\end{empheq}
\end{subequations}
Moreover, let $\b{R}_q\in$ SO(3) be the rotation matrix from \{$Q$\} to \{$I$\}, and $\bs{\omega}_q\in\mathbb{R}^3$ be the angular velocity of $\{Q\}$ with respect to $\{I\}$, expressed in $\{Q\}$. Then, the attitude kinematics of the quadrotor are given by
\begin{equation}\label{eq:quad_kinematics}
	\dot{\b{R}}_q = \b{R}_q\b{S}(\bs{\omega}_q)\,.
\end{equation}

By defining a virtual thrust-vector control input, denoted by $\b{u} \in \mathbb{R}^3$, as the instantaneous force vector produced by the quadrotor expressed in \{$B$\}, i,e,
\begin{equation}\label{eq:tvcinput}
	\b{u} \coloneq -T\b{R}^\mathsf{T}\b{R}_q\b{e}_3\,,
\end{equation}
where $T \in \mathbb{R}^+_0$ is the thrust magnitude, the dynamics of the vehicle are obtained:
\begin{subequations}\label{eq:dynamics}
	\begin{empheq}[left=\empheqlbrace]{align}
		\dot{\b{v}} &= g\b{e}_3 + \frac{1}{m}\b{R}\mathbf{u} +\b{b}\,,\\[2pt]
		\dot{\bs{\omega}} &= -\b{J}^{-1}\b{S}(\bs{\omega})\b{J}\bs{\omega} + L\b{J}^{-1}\b{S}(\b{e}_3)\b{u}\,,
	\end{empheq}
\end{subequations}
where $g\in\mathbb{R}^+$ is the gravitational acceleration, $L \in \mathbb R^+$ is the distance between the center of mass and the universal joint, both assumed to be located on the longitudinal axis, and $\b{b} \in \mathbb{R}^3$ is an unknown disturbance.
\begin{assumption}\label{ass:disturb}
	The unknown exogenous disturbance, $\b{b}$, is constant or slowly time-varying, i.e, $\dot{\b{b}}\approx\b{0}$.
\end{assumption}

\subsection{Reduced attitude Representation in $\mathbb{S}^2$}
Considering the nature of thrust-vector actuation, the control task, and the axial symmetry of the vehicle, it is advantageous to decouple the rotational motion of the longitudinal axis (pitch and roll) from rotational motion around this axis (yaw). To do so, we first rewrite the attitude kinematics of the vehicle by applying the projection map \cite{Bullo1995}
\begin{equation}\label{eq:projector}
	\bs{\pi}_i(\b{R}) \coloneq \b{R}\b{e}_i\,,
\end{equation}
which takes the full attitude of the vehicle given by its rotation matrix, $\b{R} \in$ SO(3), and projects it into a unit vector in $\mathbb{S}^2$.
Using this map, the rotation matrix can be expressed as	$\b{R} = \left[\,\b{r}_1\;\b{r}_2\;\b{r}_3\,\right]$, where
\begin{equation}
	\b{r}_i \coloneq  \bs{\pi}_i(\b{R})\in \mathbb{S}^2\,,\hspace{10pt}i = \{1,\,2,\,3\}\,.
\end{equation}
The unit vector $\b{r}_3$ defines the direction of the longitudinal axis, i.e., the $z$-axis of $\{B\}$, expressed in \{$I$\} and its kinematics are given by
\begin{equation}
	\dot{\b{r}}_3 = -\b{S}(\b{r}_3 )\,\bs{\Omega}_{12}\,,\hspace{10pt}	\dot{\b{r}}_3  \in T_{\b{r}_3}\mathbb{S}^2\subset \mathbb{R}^3\,,
\end{equation} 
where 
\begin{equation}\label{eq:om12}
	\bs{\Omega}_{12} \coloneq -\b{S}^2(\b{r}_3)\,\bs{\Omega} = \omega_1\b{r}_1 + \omega_2 \b{r}_2\,,\hspace{10pt}		\bs{\Omega}_{12}  \in T_{\b{r}_3}\mathbb{S}^2\,,
\end{equation}
with
\begin{equation}\label{eq:om}
	\bs{\Omega} \coloneq \b{R}\,\bs{\omega}\,.
\end{equation}

\begin{assumption}\label{ass:diag}
	  Due to the axial symmetry of the vehicle, its inertia tensor is taken as a diagonal matrix of the form $\b{J} = \text{diag}(j_\perp,j_\perp,j_\parallel)$, where $j_\perp,\,j_\parallel \in\mathbb{R}^+$ are the transverse and longitudinal inertia values, respectively.
\end{assumption}
Using the reduced attitude representation, together with Assumption \ref{ass:diag}, the dynamics and kinematics take the new form
\begin{subequations}\label{eq:dynamics_r3}
	\begin{empheq}[left=\empheqlbrace]{align}
		&\dot{\b{p}} = \b{v}\,,\\[2pt]
		&\dot{\b{v}} = g\b{e}_3 + \frac{1}{m}u_3\b{r}_3  -\frac{1}{mL}\b{S}(\b{r}_3)\bs{\Gamma} + \b{b}\,,\\[2pt]
		&\dot{\b{r}}_3 = -\b{S}(\b{r}_3 )\,\bs{\Omega}_{12}\,,\\[2pt]
		&\dot{\bs{\Omega}}_{12} = \frac{1}{j_\perp}\bs{\Gamma}_c\,,
	\end{empheq}
\end{subequations}
where $\bs{\Gamma}\in T_{\b{r}_3}\mathbb{S}^2$ is the torque input caused by quadrotor thrust vectoring, defined as
\begin{equation}\label{eq:Gamma}
	\bs\Gamma \coloneq  L\b{S}(\b{r}_3)\b{R}\b{u}\,,
\end{equation}
and $\bs{\Gamma}_c\in T_{\b{r}_3}\mathbb{S}^2$ is an auxiliary torque input that gathers the thrust vectoring, $\bs{\Gamma}$, and gyroscopic terms, $\bs{\Gamma}_0\in T_{\b{r}_3}\mathbb{S}^2$, defined as
\begin{equation}\label{eq:Gamma_c}
	\bs\Gamma_c \coloneq \bs\Gamma + \bs\Gamma_0\,,
\end{equation}
with
\begin{equation}\label{eq:gamma_0}
	\bs{\Gamma}_0 \coloneq j_\perp\,\b{S}^2(\b{r}_3)\left(\b{R}\b{J^{-1}}\b{S}(\bs\omega)\b{J}\bs\omega - \b{S}(\bs{\Omega})\bs{\Omega}_{12}\right)\,.
\end{equation}

\subsection{Non-minimum Phase Behavior and Coordinate Change}
Looking at the vehicle dynamics and kinematics in (\ref{eq:dynamics_r3}), besides the problem of underactuation, a coupling between force and torque actuation is noted through the dependency of the position dynamics on the torque input $\bs\Gamma$. This is characteristic to underactuated vehicles controlled by means of a vectored force and is known to cause undesired non-minimum phase behavior \cite{Pflimlin2004}. To tackle this problem, and similarly to what was done in \cite{Reis2023,Cabecinhas2018,Pflimlin2004} for other autonomous platforms controlled by a vectored force, let us write the dynamics and kinematics of the system with respect to an arbitrary point located on the longitudinal axis of the vehicle at a given distance, denoted by $\delta \in \mathbb{R}^{+}$, above its center of mass, as illustrated in Fig. \ref{fig:quadrocket_scheme}. The inertial position and velocity of this point, respectively $\b{p}_\delta,\,\b{v}_\delta \in \mathbb{R}^3$, are given by
\begin{align}\label{eq:pv_delta}
	&\b{p}_\delta =\b{p} - \delta\b{r}_3 \,,\\[5pt]
	&\b{v}_\delta = \b{v} +\delta\,\b{S}(\b{r}_3 )\,\bs{\Omega}_{12}\,,
\end{align}
leading to the following position dynamics and kinematics:
\begin{subequations}
	\begin{align}
			&\dot{\b{p}}_\delta = \b{v}_\delta = \b{v} - \delta\,\dot{\b{r}}_3\,,\label{eq:kin_delta}\\
			&	\dot{\b{v}}_\delta = g\b{e}_3 + \overline{u_3}\,\b{r}_3 + \b{S}(\b{r}_3 )\left(\frac{\delta}{j_\perp}\,\bs{\Gamma}_c - \frac{1}{mL}\bs{\Gamma}\right) + \b{b}\,,\label{eq:dyn_delta}
	\end{align}
\end{subequations}
where
\begin{equation}\label{eq:u3_bar}
	\overline{u_3}\coloneq \frac{u_3}{m} + \delta\,\bs{\Omega}^\mathsf{T}_{12}\bs{\Omega}_{12}\,.
\end{equation} 
Based on (\ref{eq:dyn_delta}), it is straightforward to infer that the coupling term coming from the torque input can be canceled with the additional cross-product term arising from the coordinate change if
\begin{equation}\label{eq:delta_condition}
	\delta = \displaystyle\frac{j_\perp}{mL}\:\:\wedge\:\: \bs{\Gamma} = \bs{\Gamma}_c\,.
\end{equation}
The value of $\delta$ is user-defined, corresponding to a control point shift to the center of oscillation \cite{Murray1995}. On the other hand, the condition on the torque input is equivalent to $\bs{\Gamma}_0=\b{0}$, which, from (\ref{eq:gamma_0}), is verified for $\omega_3 = 0$. Considering that rotational motion around $\b{r}_3$ is decoupled from the remaining rotational degrees of freedom, and that $\omega_3$ can be set by commanding the angular velocity of the quadrotor accordingly, we assume that this condition can also be satisfied. Then, the model used for control design takes the final form
\begin{subequations}\label{eq:dynamics_delta_r3}
	\begin{empheq}[left=\empheqlbrace]{align}
		&\dot{\b{p}}_\delta = \b{v}_\delta\,,\\[2pt]
		&\dot{\b{v}}_\delta = g\b{e}_3 + \overline{u_3}\,\b{r}_3 + \b{b}\,,\\[2pt]
		&\dot{\b{r}}_3 = -\b{S}(\b{r}_3 )\,\bs{\Omega}_{12}\,,\label{eq:r3_dot_eq}\\[2pt]
		&\dot{\bs{\Omega}}_{12} = \frac{1}{j_\perp}\bs{\Gamma}_c\,.
	\end{empheq}
\end{subequations}

\subsection{Problem Statement}
We are now in the position to formally state the two problems that naturally arise from the complementary control tasks:
\begin{problem}[Virtual Thrust-Vector Control]\label{prob:virtual_tvc}
	Given a time-parameterized reference trajectory denoted by $\b{p}_d(t) \in \mathbb{R}^3$, of class $\mathcal{C}^4$, and whose time derivatives are bounded, design a control law for the virtual thrust-vector input $\b{u}$ in \eqref{eq:tvcinput} that, under the dynamic model in \eqref{eq:dynamics_delta_r3}, ensures convergence of the tracking error $\b{p}_\delta - \b{p}_d$ to zero in the presence of unknown constant disturbances, $\b{b}$.
\end{problem}

\begin{problem}[Quadrotor Attitude Control]\label{prob:quadrotor_control}
	 Derive a control law for the angular velocity $\bs{\omega}_q$ in (2) such that the quadrotor is able to track the commanded virtual thrust-vector input $\b{u}$.  
\end{problem}

\section{NONLINEAR CONTROL DESIGN}

In this section, strategies to independently solve Problems \ref{prob:virtual_tvc} and \ref{prob:quadrotor_control} will be derived by leveraging the tools of nonlinear adaptive backstepping and dynamic surface control.

\subsection{Virtual Thrust-Vector Control}

Given the model in (\ref{eq:dynamics_delta_r3}), we shall design a control law for the virtual thrust-vector input, $\b{u}(t)$, that solves Problem \ref{prob:virtual_tvc}. For that purpose, let us start by defining the position and velocity tacking errors as the first states of our error system:
\begin{equation}
	\b{z}_1 \coloneq \b{p}_\delta - \b{p}_d \in \mathbb{R}^3\,,
\end{equation}
\begin{equation}\label{eq:z2}
	\b{z}_2 \coloneq \b{v}_\delta - \dot{\b{p}}_d \in \mathbb{R}^3\,.
\end{equation}

Consider now the first candidate Lyapunov function given by 
\begin{equation}
	V_1 \coloneq \frac{1}{2}\b{z}^\mathsf{T}_1\b{z}_1\,.
\end{equation}
Its time derivative can be written as
\begin{equation}\label{eq:v1_derivative}
	\dot{V}_1 = -W_1 + \b{z}^\mathsf{T}_1\left(\b{z}_2 + \b{K_p}\b{z}_1\right)\,,
\end{equation}
where $ \b{K_p}\in \mathbb{R}^{3\times3}$ is a positive definite gain matrix, and $W_1 \coloneq \b{z}^\mathsf{T}_1\b{K_p}\b{z}_1 \in \mathbb{R}^+_0$ is a non-negative auxiliary term. Looking at (\ref{eq:v1_derivative}), a new error $\b{e} \coloneq \b{z}_2 + \b{K_p}\b{z}_1 \in \mathbb{R}^3$ is defined, leading to our second candidate Lyapunov function:
\begin{equation}\label{eq:V2}
	V_2 \coloneq V_1 + \frac{1}{2}\b{e}^\mathsf{T}\b{e}\,.
\end{equation}
Using the dynamics given in (\ref{eq:dynamics_delta_r3}), the time derivative of (\ref{eq:V2}) can be expressed as
\begin{equation} \label{eq:V2_derivative}
	\dot{V}_2 = - W_2 + \b{e}^\mathsf{T}\left( \bs{\zeta} + g\b{e}_3 + \b{b} - \ddot{\b{p}}_d + \overline{u_3}\,\b{r}_3\right),
\end{equation}
where $W_2\coloneq W_1 + \b{e}^\mathsf{T}\b{K_v}\b{e} \in \mathbb{R}^{+}_{0}$ is another non-negative auxiliary term, with $\b{K_v} \in \mathbb{R}^{3\times3}$ being a positive definite symmetric gain matrix, and where
\begin{equation}\label{eq:zeta}
	\bs{\zeta} \coloneq \left(\b{I} + \b{K_v}\b{K_p} \right)\b{z}_1 + \left(\b{K_p} + \b{K_v} \right)\b{z}_2 \in \mathbb{R}^3
\end{equation} 
is an auxiliary vector, defined to simplify notation. To cancel out the last term of (\ref{eq:V2_derivative}), let us define a vector which gathers the known terms as
\begin{equation}\label{eq:xi}
	\bs{\xi} \coloneq \bs{\zeta} +
	g\b{e}_3 + \widehat{\b{b}}_1 - \ddot{\b{p}}_d \in \mathbb{R}^3\,,
\end{equation}
where $\b{\widehat{b}}_1 \in \mathbb{R}^3$ is a first estimate on the constant unknown disturbance. Accordingly, the estimation error is defined as 
\begin{equation}\label{eq:b1_tilde}
	\widetilde{\b{b}}_1 \coloneq \b{\widehat{b}}_1 - \b{b} \in \mathbb{R}^3\,.
\end{equation}
By replacing (\ref{eq:xi}) and (\ref{eq:b1_tilde}) in (\ref{eq:V2_derivative}), we get
\begin{equation} \label{eq:V2_derivative2}
	\dot{V}_2 = - W_2 + \b{e}^\mathsf{T}\left( \bs{\xi} + \overline{u_3}\,\b{r}_3 \right)  - \b{e}^\mathsf{T}\widetilde{\b{b}}_1  \,.
\end{equation}

Note that the second term in (\ref{eq:V2_derivative2}) cannot be directly canceled out due to the underactuated nature of the system. Indeed, nothing imposes alignment between $\bs\xi$ and $\b{r}_3$. Thus, to tackle this problem, a desired direction for the longitudinal axis is defined as
\begin{equation}\label{eq:r3d}
	\b{r}_{3d} \coloneq \frac{\bs{\xi}}{||\bs{\xi}||}\,,\hspace{10pt}\bs\xi\neq \b0\,,
\end{equation}
and the following law for the longitudinal component of the virtual thrust-vector is imposed:
\begin{equation}\label{eq:u3_bar_law}
	\overline{u_3} \coloneq -\bs{\xi}^\mathsf{T}\b{r}_3\,.
\end{equation}
Replacing both definitions, (\ref{eq:r3d}) and (\ref{eq:u3_bar_law}), in (\ref{eq:V2_derivative2}), we obtain
\begin{equation} \label{eq:V2_derivative3}
	\dot{V}_2 = - W_2 + \Delta   - \b{e}^\mathsf{T}\widetilde{\b{b}}_1  \,,
\end{equation}
with
\begin{equation}
	\Delta \coloneq ||\bs{\xi}||\b{e}^\mathsf{T}\b{S}(\b{r}_3)\b{z_r} \in \mathbb{R}\,,
\end{equation}
where a reduced attitude error
\begin{equation}\label{eq:attitude_err}
	\b{z_r} \coloneq \b{S}(\b{r}_{3d})\b{r}_3 \in T_{\b{r}_{3}}\mathbb{S}^2 \cap  T_{\b{r}_{3d}}\mathbb{S}^2
\end{equation}
is defined. This error definition respects the geometrical properties of $\mathbb{S}^2$ -- the space where the reduced attitude representation lives -- allowing for a more intuitive control design. 

To account for the last term in (\ref{eq:V2_derivative3}), a third candidate Lyapunov function is defined as
\begin{equation}\label{eq:V3}
	V_3 \coloneq V_2 + \frac{1}{2}\widetilde{\b{b}}_1^\mathsf{T}\bs{\Lambda}^{-1}_1\widetilde{\b{b}}_1\,,
\end{equation}
where $\bs{\Lambda}_1 \in \mathbb{R}^{3\times3}$ is a positive definite auxiliary gain matrix. The time derivative of (\ref{eq:V3}) can be written as
\begin{equation}\label{eq:V3_derivative}
	\dot{V}_3 = -W_2 + \Delta + \widetilde{\b{b}}_1^\mathsf{T}\left(\bs{\Lambda}_1^{-1}\dot{\widetilde{\b{b}}}_1 - \b{e} \right)\,,
\end{equation}
thus, if the dynamics of the estimation error are given by
\begin{equation}\label{eq:b1dot_tilde}
	\dot{\widetilde{\b{b}}}_1 = \bs{\Lambda}_1\b{e}\,,
\end{equation}
the time derivative in (\ref{eq:V3_derivative}) simplifies to
\begin{equation}\label{eq:V3_derivative2}
	\dot{V}_3 = -W_2 + \Delta\,.
\end{equation}

Under Assumption \ref{ass:disturb}, the estimation error dynamics in \eqref{eq:b1dot_tilde} can be enforced through the adaptation law
\begin{equation}\label{eq:adapt_1}
	\dot{\widehat{\b{b}}}_1 = \bs{\Lambda}_1\b{e}\,,
\end{equation} 
which will correspond to an integral action that grants the controller robustness to constant or slowly varying disturbances.

At this point, we must ensure that $\b{r}_3$ aligns with $\b{r}_{3d}$, thereby canceling out the term $\Delta$ in (\ref{eq:V3_derivative2}). In other words, a control law for the torques produced by the virtual thrust-vector must be devised such that reduced attitude control is achieved. With that goal in mind, we introduce yet another candidate Lyapunov function
\begin{equation}\label{eq:V4}
	V_4 \coloneq V_3 + h_r\,\left(1-\b{r}_{3d}^\mathsf{T}\b{r}_3\right)\,,
\end{equation}  
where $h_r \in \mathbb{R}^{+}$ is a positive auxiliary tuning parameter. The time derivative of (\ref{eq:V4}) can be written as
\begin{equation}\label{eq:V4_derivative}
	\dot{V}_4 = -W_3 + \Delta + \b{z_r}^\mathsf{T}\left[k_r\b{z_r} + h_r\left(\bs{\Omega}_{12} - \bs{\Omega}_{12d} \right)\right]\,,
\end{equation}
where $W_3 \coloneq W_1 + W_2 + k_r \b{z_r}^\mathsf{T}\b{z_r} \in \mathbb{R}^{+}_{0}$ is a third non-negative auxiliary term, $k_r \in \mathbb{R}^{+}$ is a positive scalar gain, and $\bs{\Omega}_{12d} \coloneq \b{S}(\b{r}_{3d})\dot{\b{r}}_{3d} \in T_{\b{r}_{3d}}\mathbb{S}^2$ is the desired inertial angular velocity projected onto the plane normal to $\b{r}_{3d}$. Let us now define a commanded angular velocity $\bs{\Omega}_{12c} \in T_{\b{r}_{3}}\mathbb{S}^2$ as
\begin{equation}\label{eq:om12c}
	\bs{\Omega}_{12c} \coloneq -\frac{k_r}{h_r}\b{z_r} + \frac{1}{h_r}||\bs{\xi}||\b{S}(\b{r}_3)\b{e} - \b{S}^2(\b{r}_3)\widehat{\bs{\Omega}}_{12d}\,,
\end{equation}
where $\widehat{\bs{\Omega}}_{12d} \in \mathbb{R}^3$ (shown in Appendix \ref{app:aux_var}) is the known part of $\b{\bs{\Omega}}_{12d}$, computed using a second estimate on $\b{b}$ defined as $\widehat{\b{b}}_2 \in \mathbb{R}^3$, such that
\begin{equation}\label{eq:om12dhat}
	\widehat{\bs{\Omega}}_{12d} =	\bs{\Omega}_{12d}   + ||\bs{\xi}||^{-1}\b{S}(\b{r}_{3d})\left(\b{K_p} + \b{K_v}\right)\,\widetilde{\b{b}}_2\,.
\end{equation}
The associated angular velocity error is defined as
\begin{equation}\label{eq:ang_vel_err}
	\b{z}_{\bs{\Omega}} \coloneq \bs{\Omega}_{12} - \bs{\Omega}_{12c} \in T_{\b{r}_{3}}\mathbb{S}^2\,.
\end{equation} 
Then, through substitution and simple algebraic manipulation, we can write (\ref{eq:V4_derivative}) as
\begin{equation}\label{eq:V4_derivative2}
	\dot{V}_4 = -W_3 + h_r\b{z}^\mathsf{T}_{\b{r}} \left(\b{z}_{\bs{\Omega}} + \bs{\Xi}\,\widetilde{\b{b}}_2 \right)\,,
\end{equation}
with 
\begin{equation}\label{eq:Xi}
	\bs{\Xi} \coloneq -||\bs{\xi}||^{-1}\b{S}^2(\b{r}_{3})\b{S}(\b{r}_{3d})\left(\b{K_p} + \b{K_v}\right) \in \mathbb{R}^{3\times3}\,.
\end{equation}
Our goal now is to render $\dot{V}_4$ negative by canceling out the term projected onto ${\bf z_r}$ in (\ref{eq:V4_derivative2}). To that end, we start by deriving an appropriate adaptation law for the second estimate on the unknown disturbance, $\widehat{\b{b}}_2$. Similarly to what was done in (\ref{eq:V3}) for the first estimate, we define a fifth candidate Lyapunov function as
\begin{equation}\label{eq:V5}
	V_5 \coloneq V_4 + \frac{1}{2}\widetilde{\b{b}}_2^\mathsf{T}\bs{\Lambda}^{-1}_2\widetilde{\b{b}}_2\,,
\end{equation}
where $\bs{\Lambda}_2 \in \mathbb{R}^{3\times3}$ is a positive definite auxiliary gain matrix. The time derivative of (\ref{eq:V5}) can be written as 
\begin{equation}
	\dot{V}_5 = -W_3 +  h_r\b{z}^\mathsf{T}_{\b{r}}\b{z}_{\bs{\Omega}} + \widetilde{\b{b}}_2^\mathsf{T}\left(\bs{\Lambda}_2^{-1}\dot{\widetilde{\b{b}}}_2 +h_r\, \bs{\Xi}^\mathsf{T} \b{z_r} \right)\,.
\end{equation}
Therefore, the adaptation law
\begin{equation}\label{eq:adapt2}
	\dot{\widehat{\b{b}}}_2 = -h_r\bs{\Lambda}_2\,\bs{\Xi}^\mathsf{T} \b{z_r}\,,
\end{equation} 
under Assumption \ref{ass:disturb}, leads to
\begin{equation}
	\dot{V}_5 = -W_3 +  h_r\b{z}^\mathsf{T}_{\b{r}}\b{z}_{\bs{\Omega}}\,.
\end{equation}

The next step consists in ensuring that the angular velocity error $\b{z}_{\bs{\Omega}}$ goes to zero. To achieve this, a new candidate Lyapunov function is introduced:
\begin{equation} \label{eq:V6}
	V_6 \coloneq V_5 + \frac{h_\Omega}{2}\b{z}^\mathsf{T}_{\bs{\Omega}}\b{z}_{\bs{\Omega}}\,,
\end{equation}
where $h_\Omega \in \mathbb{R}^+$ is a positive auxiliary tuning parameter. The derivative of $V_6$ is given by
\begin{equation}
	\dot{V}_6 = -W_4 + \b{z}^\mathsf{T}_{\bs{\Omega}}\left(h_r\b{z_r} + k_{\Omega}\b{z}_{\bs{\Omega}} +h_\Omega\dot{\b{z}}_{\bs{\Omega}} \right)\,,
\end{equation}
where $k_\Omega \in \mathbb{R}^+$ is a positive scalar gain and $W_4 \coloneq W_1 +W_2 + W_3 + k_\Omega \b{z}^\mathsf{T}_{\bs{\Omega}} \b{z}_{\bs{\Omega}} \in \mathbb{R}^+_{0} $ is a non-negative auxiliary term. Looking at the angular velocity dynamics in (\ref{eq:dynamics_delta_r3}), we see that the angular acceleration $\dot{\bs{\Omega}}_{12}$ can be commanded as needed through the control torque $\bs{\Gamma}_c$. Let us then impose
\begin{equation}\label{eq:ang_accel_law}
	\bs{\Gamma}_c \coloneq -j_\perp\left(\frac{k_\Omega}{h_\Omega}\b{z}_{\bs{\Omega}} + \frac{h_r}{h_\Omega}\b{z_r} + \b{S}^2(\b{r}_3)\overline{\dot{\bs{\Omega}}}_{12c}\in T_{\b{r}_3}\mathbb{S}^2\right),
\end{equation}
where $\overline{\dot{\bs{\Omega}}}_{12c} \in \mathbb{R}^3$ (shown in Appendix \ref{app:aux_var}) is the known part of $\dot{\bs{\Omega}}_{12c} $, computed using a third estimate on $\b{b}$ defined as $\widehat{\b{b}}_3 \in \mathbb{R}^3$, such that
\begin{equation}\label{eq:om12cbardot}
	\overline{\dot{\bs{\Omega}}}_{12c} = \dot{\bs{\Omega}}_{12c} + \bs{\Theta}\, \widetilde{\b{b}}_3\,,
\end{equation}
with $\bs{\Theta} \in \mathbb{R}^{3\times3}$ being the matrix that gathers all dependencies on $\widetilde{\b{b}}_3$, also shown in Appendix \ref{app:aux_var}. Using the control law (\ref{eq:ang_accel_law}), together with the angular velocity error definition (\ref{eq:ang_vel_err}), the time derivative of (\ref{eq:V6}) can be rewritten as
\begin{equation}
	\dot{V}_6 = -W_4 + h_\Omega \b{z}^\mathsf{T}_{\bs{\Omega}}\,\bs{\Theta}\,\widetilde{\b{b}}_3\,.
\end{equation}
A final candidate Lyapunov function is now introduced with the goal of deriving an adaptation law for the third and final estimate $\widehat{\b{b}}_3$:
\begin{equation}\label{eq:V}
	V \coloneq V_6 +\frac{1}{2}\widetilde{\b{b}}_3^\mathsf{T}\bs{\Lambda}^{-1}_3\widetilde{\b{b}}_3\,,
\end{equation}
where $\bs{\Lambda}_3 \in \mathbb{R}^{3\times3}$ is a positive definite auxiliary gain matrix. Its time derivative is given by
\begin{equation}
	\dot{V} = -W_4 + \widetilde{\b{b}}^\mathsf{T}_3\left(\bs{\Lambda}_3^{-1}\dot{\widetilde{\b{b}}}_3 +h_\Omega\, \bs{\Theta}^\mathsf{T} \b{z}_{\bs{\Omega}} \right)\,.
\end{equation}
Therefore, the adaptation law
\begin{equation}\label{eq:adapt_3}
	\dot{\widehat{\b{b}}}_3 = -h_\Omega\bs{\Lambda}_3\,\bs{\Theta}^\mathsf{T}\b{z}_{\bs{\Omega}}\,,
\end{equation} 
under Assumption \ref{ass:disturb}, leads to
\begin{equation}\label{eq:V_derivative}
	\dot{V} = -W_4\leq0\,.
\end{equation}

At last, with all inputs of model (\ref{eq:dynamics_delta_r3}) already defined, the virtual thrust-vector input $\b{u}$ can be retrieved. From definition (\ref{eq:u3_bar}) and control law (\ref{eq:u3_bar_law}) we get
\begin{equation}
	u_3 \coloneq -m\left(\bs{\xi}^\mathsf{T}\b{r}_3 + \delta\,||\bs{\Omega}_{12}||^2\right)\,.
\end{equation}
Moreover, from (\ref{eq:Gamma}), together with (\ref{eq:delta_condition}), we have that
\begin{equation}
	-\b{S}^2(\b{e}_3)\b{u} = -\frac{1}{L} \b{S}(\b{e}_3)\b{R}^\mathsf{T}\bs{\Gamma}_c\,.
\end{equation}
Thus, the virtual thrust-vector input, which can be decomposed as
\begin{equation}
	\b{u} = \b{e}_3\b{e}^\mathsf{T}_3\b{u}-\b{S}^2(\b{e}_3)\b{u}\,,
\end{equation}
is given by
\begin{equation}\label{eq:control_law}
	\b{u} \coloneq -\frac{1}{L}\b{S}(\b{e}_3)\b{R}^\mathsf{T}\bs{\Gamma}_c-m\left(\bs{\xi}^\mathsf{T}\b{r}_3 + \delta\,||\bs{\Omega}_{12}||^2\right)\b{e}_3\,,
\end{equation} 
with $\bs{\Gamma}_c$ defined as in (\ref{eq:ang_accel_law}).

\begin{figure*}[t!]
	\centering
	\includegraphics[width=\textwidth]{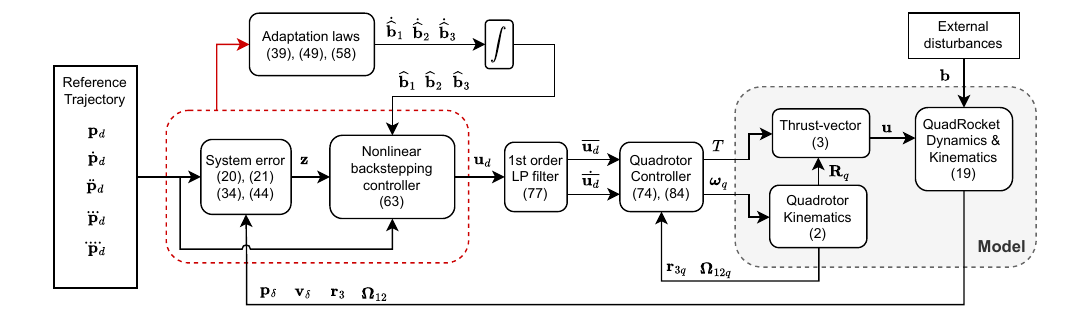}
	\caption{Diagram of control architecture.}
	\label{fig:control_diagram}
\end{figure*}

We are now ready to state the main theorem regarding the stability of the origin of the error system defined by $(\b{z}_1\,,\b{z}_2\,,\b{z_r}\,,\b{z}_{\bs{\Omega}})$, which dictates the trajectory tracking performance of the \textit{QuadRocket} under the single rigid body approach with idealized instantaneous thrust-vector actuation.
\begin{theorem}\label{th:main}
	Consider the dynamics (\ref{eq:dynamics}) and kinematics (\ref{eq:kinematics}) of the \textit{QuadRocket} when using the single rigid-body approach with idealized instantaneous thrust-vector actuation $\b{u}(t)$, and assume that condition (\ref{eq:delta_condition}) holds. Then, for a reference trajectory $\b{p}_d(t)$ of class at least $\mathcal{C}^4$, the control law (\ref{eq:control_law}), together with the adaptation laws (\ref{eq:adapt_1}), (\ref{eq:adapt2}), and (\ref{eq:adapt_3}), renders the tracking error variables $(\b{z}_1\,,\b{z}_2\,,\b{z_r}\,,\b{z}_{\bs{\Omega}})$ almost globally asymptotically convergent to zero in the presence of a constant bounded external disturbance, $\b{b}$. Moreover, all closed-loop signals remain bounded, and the disturbance estimate $\widehat{\mathbf{b}}_1$ converges to the true value $\b{b}$.    
\end{theorem}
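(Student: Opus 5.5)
The proof uses the composite Lyapunov function $V$ in (\ref{eq:V}) together with Barbalat's lemma and a LaSalle-type argument. The plan is as follows.

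\textbf{Step 1: Closed-loop error system and domain.} Substituting (\ref{eq:control_law}) into the dynamics (\ref{eq:dynamics}) under condition (\ref{eq:delta_condition}) recovers the reduced model (\ref{eq:dynamics_delta_r3}) with $\bs{\Gamma}_c$ given by (\ref{eq:ang_accel_law}). The closed-loop state is $\bs{\chi} \coloneq (\b{z}_1,\b{z}_2,\b{r}_3,\b{z}_{\bs{\Omega}},\widetilde{\b{b}}_1,\widetilde{\b{b}}_2,\widetilde{\b{b}}_3)$, with $\b{r}_{3d}$, $\bs\xi$ and $\widehat{\bs\Omega}_{12d}$ viewed as functions of the state and of $t$ through $\b{p}_d$ and its derivatives, which are bounded since $\b{p}_d\in\mathcal{C}^4$. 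We work on the set where $\bs\xi\neq\b0$ and $\b{r}_3\neq-\b{r}_{3d}$. The chain of derivatives $V_1,\dots,V_6,V$ shows that $\dot V=-W_4\le 0$ along solutions. Since $V$ is positive definite and radially unbounded in $(\b{z}_1,\b{e},\widetilde{\b{b}}_i,\b{z}_{\bs\Omega})$, and $h_r(1-\b{r}_{3d}^\mathsf{T}\b{r}_3)\in[0,2h_r]$, every one of these signals is bounded by $V(0)$.

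\textbf{Step 2: Boundedness of all signals.} From $\b{z}_1$ and $\b{e}$ bounded we get $\b{z}_2=\b{e}-\b{K_p}\b{z}_1$ bounded, and therefore $\bs\zeta$ and $\bs\xi$ bounded. The estimates $\widehat{\b{b}}_i=\widetilde{\b{b}}_i+\b{b}$ are bounded. $\bs{\Omega}_{12c}$, and hence $\bs\Omega_{12}$, are bounded provided $\|\bs\xi\|^{-1}$ is bounded.

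This is the main obstacle. Boundedness of $\|\bs\xi\|^{-1}$ requires $\bs\xi$ to stay away from zero. I would handle it with a standing feasibility requirement, as is customary for thrust-direction controllers: assume the initial condition and gains are such that $\inf_t\|g\b{e}_3-\ddot{\b{p}}_d+\b{b}\|$ exceeds a bound on the remaining terms of $\bs\xi$. Those remaining terms are bounded in terms of $V(0)$ via Step 1. Once this holds, $\b{r}_{3d}$ is well defined and bounded, and $\bs{\Xi}$, $\bs\Theta$, $\dot{\b{r}}_{3d}$ are bounded. Consequently all of $\dot{\b{z}}_1,\dot{\b{e}},\dot{\b{z}}_{\b r},\dot{\b{z}}_{\bs\Omega}$ are bounded, so $W_4$ is uniformly continuous.

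\textbf{Step 3: Convergence and the antipodal set.} Since $\int_0^\infty W_4\,dt\le V(0)$, Barbalat's lemma gives $\b{z}_1,\b{e},\b{z}_{\b r},\b{z}_{\bs\Omega}\to\b0$, and hence $\b{z}_2\to\b0$. Because $\b{z}_{\b r}=\b{S}(\b{r}_{3d})\b{r}_3\to\b0$, we have $\b{r}_3\to\pm\b{r}_{3d}$. The equilibrium $\b{r}_3=-\b{r}_{3d}$ is where $1-\b{r}_{3d}^\mathsf{T}\b{r}_3=2$ is maximal. A local analysis of the attitude subsystem with $\b{z}_{\bs\Omega}$ shows that this configuration is unstable with a stable manifold of zero measure, as in the standard reduced-attitude argument on $\mathbb{S}^2$ \cite{Bullo1995}. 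Alternatively, one can restrict initial conditions to $V(0)<2h_r$, which forbids reaching the antipode. Either route yields almost global convergence.

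\textbf{Step 4: Convergence of $\widehat{\b{b}}_1$.} On the limit set, $\b{e}\equiv\b0$ and $\b{z}_1\equiv\b0$, so $\dot{\b{e}}\to\b0$. Since $\dot{\b{e}}$ is uniformly continuous, by boundedness of the second derivatives, which follows from $\b{p}_d\in\mathcal{C}^4$, Barbalat applies again. From (\ref{eq:V2_derivative2}) the error dynamics read $\dot{\b{e}}=-\b{K_v}\b{e}-\b{z}_1+\|\bs\xi\|\b{S}(\b{r}_3)\b{z}_{\b r}-\widetilde{\b{b}}_1$, up to terms vanishing with $\b{z}_{\b r}$. Therefore $\widetilde{\b{b}}_1\to\b0$. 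No such conclusion is claimed for $\widehat{\b b}_2$ or $\widehat{\b b}_3$, which only enter multiplied by $\bs\Xi$ and $\bs\Theta$ and lack persistent excitation.
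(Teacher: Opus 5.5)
Your argument is the paper's proof in all essentials. It uses the same composite $V$ with $\dot V=-W_4$, boundedness of the error derivatives from $\mathbf{p}_d\in\mathcal{C}^4$, and Barbalat's lemma. You apply Barbalat to $W_4$ via $\int_0^\infty W_4\,dt\le V(0)$, while the paper applies it to $\dot V$ via boundedness of $\ddot V$; this amounts to the same thing. You then reach $\mathbf{r}_3\to\pm\mathbf{r}_{3d}$, and for $\widehat{\mathbf{b}}_1$ you use the paper's identity $\boldsymbol{\xi}+\overline{u_3}\,\mathbf{r}_3=\|\boldsymbol{\xi}\|\,\mathbf{S}(\mathbf{r}_3)\mathbf{z_r}$. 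Your Step 4 is the paper's final step written for $\dot{\mathbf{e}}$ instead of $\dot{\mathbf{z}}_2$. Your expression $\dot{\mathbf{e}}=-\mathbf{z}_1-\mathbf{K_v}\mathbf{e}-\widetilde{\mathbf{b}}_1+\|\boldsymbol{\xi}\|\,\mathbf{S}(\mathbf{r}_3)\mathbf{z_r}$ is in fact exact, with no extra terms. Your second Barbalat application, giving $\dot{\mathbf{e}}\to\mathbf{0}$, supplies a justification that the paper omits when it simply asserts $\dot{\mathbf{z}}_2\to\mathbf{0}$.

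Where you go beyond the paper, you also fall short of the statement. You are right that $\|\boldsymbol{\xi}\|^{-1}$ must stay bounded. The paper's Appendix B calls $\dot{\mathbf{r}}_{3d}$, $\boldsymbol{\Omega}_{12c}$ and $\overline{\dot{\boldsymbol{\Omega}}}_{12c}$ bounded ``as depending on bounded terms'', so it silently assumes $\boldsymbol{\xi}$ stays away from zero.

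However, your feasibility condition bounds $\|\boldsymbol{\zeta}\|+\|\widetilde{\mathbf{b}}_1\|$ by a multiple of $\sqrt{V(0)}$. That confines the initial conditions to a sublevel set of $V$, so what you prove is a regional result, not the almost-global one claimed. You should say so explicitly. Alternatively, state non-vanishing of $\boldsymbol{\xi}$ along solutions as a standing hypothesis, which is what the paper tacitly does. Note that $\dot V\le 0$ by itself does not keep $\boldsymbol{\xi}$ away from zero for large initial errors.

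Step 3 has the same problem. The route $V(0)<2h_r$ is again a sublevel-set argument, so ``either route yields almost global convergence'' is false for it. The other route (antipode unstable with a measure-zero stable manifold) is only asserted. For this time-varying adaptive closed loop, it does not follow directly from the standard reduced-attitude results on $\mathbb{S}^2$. The paper is no more rigorous here: it only observes that $\{\mathbf{r}_3=-\mathbf{r}_{3d}\}$ has measure zero, which does not rule out trajectories converging to that set.

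The antipode analysis is not actually needed for the conclusions as listed. Since $\mathbf{z_r}=\mathbf{S}(\mathbf{r}_{3d})\mathbf{r}_3$ vanishes at both poles, $(\mathbf{z}_1,\mathbf{z}_2,\mathbf{z_r},\mathbf{z}_{\boldsymbol{\Omega}})\to\mathbf{0}$ holds whichever pole is approached. By your Step 4, $\widehat{\mathbf{b}}_1\to\mathbf{b}$ also holds at either pole. The antipode question matters only if you want to conclude $\mathbf{r}_3\to\mathbf{r}_{3d}$.
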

\begin{proof}
 The Lyapunov function in (\ref{eq:V}) is positive definite and well-defined over the domain $\mathcal{O} \coloneq \{\b{z}\in \mathbb{R}^{12}:\b{r}_{3}\neq -\b{r}_{3d} \}$ as
	
	\begin{equation} \label{eq:V_compact}
		V \coloneq \frac{1}{2}\b{z}^\mathsf{T}\b{Q}\b{z} + \frac{1}{2}\widetilde{\b{b}}^\mathsf{T}\bs{\Lambda}^{-1}\widetilde{\b{b}}\,,
	\end{equation}
	where
	\begin{equation}
		\b{z} \coloneq \left[\:\b{z}^\mathsf{T}_1\:\:\b{e}^\mathsf{T}\:\:\b{z}^\mathsf{T}_\b{r}\:\:\b{z}^\mathsf{T}_{\bs{\Omega}}\: \right]^\mathsf{T} \in \mathbb{R}^{12}\,,
	\end{equation}
	\begin{equation}
		\widetilde{\b{b}} \coloneq  \left[\:\widetilde{\b{b}}^\mathsf{T}_1\:\:\widetilde{\b{b}}^\mathsf{T}_2\:\:\widetilde{\b{b}}^\mathsf{T}_3\: \right]^\mathsf{T} \in \mathbb{R}^{9}\,,
	\end{equation}
	\begin{equation}
		\b{Q} \coloneq \text{diag}\left(1,\,1,\,\frac{h_r}{1+\b{r}^\mathsf{T}_{3d}\b{r}_{3}},\,h_{\bs{\Omega}}\right) \bs{\otimes} \b{I}\in \mathbb{R}^{12\times12}
	\end{equation}
	is a positive definite matrix, and
	\begin{equation}
		\bs{\Lambda} \coloneq \bs{\Lambda}_1\bs{\oplus} \bs{\Lambda}_2\bs{\oplus} \bs{\Lambda}_3 \in \mathbb{R}^{9\times9}
	\end{equation}
	is also a positive definite matrix. When condition (\ref{eq:delta_condition}) holds, the dynamics and kinematics of the \textit{QuadRocket} with respect to the point $-\delta\b{e}_3$ in $\{B\}$, using the reduced attitude representation, are given by (\ref{eq:dynamics_delta_r3}). Moreover, we consider the ideal scenario where the quadrotor is able to instantaneously provide a desired force vector $\b{u}$(t) with respect to $\{B\}$. For these conditions, using (\ref{eq:V_derivative}), the time derivative of (\ref{eq:V_compact}) can be compactly written as
	\begin{equation}\label{eq:V_derivative2}
		\dot{V} = -\b{z}^\mathsf{T} \b{W} \b{z}\leq -\lambda_\text{min}\left(\b{W}\right)||\b{z}||^2\,,
	\end{equation}
	where 
	\begin{equation}
		\b{W} \coloneq  \b{K_p} \bs{\oplus} \b{K_v} \bs{\oplus} k_r\b{I} \bs{\oplus} k_\Omega\b{I} \in \mathbb{R}^{12\times12}
	\end{equation}
	is a positive definite matrix, and $\lambda_\text{min}\left(\b{W}\right)\in\mathbb{R}^+$ its smallest eigenvalue. Therefore, $\dot{V}$ is negative definite with respect to $\b{z}$, which implies that $V$ is bounded and, consequently, so are the errors $\b{z}$ and $\widetilde{\b{b}}$. Using this fact, one can prove that $\ddot{V}$ is also bounded (Appendix \ref{app:vbound}), allowing to infer that $\dot{V}$ is uniformly continuous. Hence, according to Barbalat's Lemma \cite[Lemma 4.2]{Slotine1991}, it is possible to conclude that $\dot{V} \rightarrow 0$, which implies that $\b{z}$ converges to zero. In turn, the convergence of $\b{z}$ to zero implies that $\b{z_1}\rightarrow \b{0}$, $\b{e}\rightarrow \b{0}$, $\b{z_2}\rightarrow \b{0}$, $\b{r}_3\rightarrow \pm\b{r}_{3d}$, and $\b{z}_{\bs{\Omega}}\rightarrow \b{0}$. The geometric ambiguity associated with the collinearity between $\b{r}_3$ and $\b{r}_{3d}$ is inevitable for static feedback controllers and is caused by the topological properties of $\mathbb{S}^2$ \cite{Bhat2000}. Nonetheless, the system configuration characterized by $\b{r}_3 = -\b{r}_{3d}$ forms a set that has measure zero. Therefore, under the weak assumption that initial conditions do not contain this set, which holds under all practical circumstances, the tracking error variables $(\b{z}_1\,,\b{z}_2\,,\b{z_r}\,,\b{z}_{\bs{\Omega}})$ almost globally asymptotically converge to the origin.
	
Finally, the convergence of $\b{z}_1$, $\b{z}_2$, and $\dot{\b{z}}_2$  to zero implies that 
		\begin{equation}
			\widehat{\b{b}}_1 \rightarrow \b{b} + \bs{\xi}+ \overline{u_3}\b{r}_3\,,
		\end{equation} 
		where $	\dot{\b{v}}_\delta = g\b{e}_3 + \overline{u_3}\,\b{r}_3 + \b{b}$, and where (\ref{eq:z2}), (\ref{eq:zeta}), (\ref{eq:xi}), and (\ref{eq:b1_tilde}) were used. Replacing $\overline{u_3}$ by the law imposed in (\ref{eq:u3_bar_law}), and using the fact that $\b{S}^2(\b{r}_3) = \b{r}_3\b{r}^\mathsf{T}_3 - \b{I}$, we get
		\begin{equation}\label{eq:b1_conv}
			\widehat{\b{b}}_1 \rightarrow \b{b} - \b{S}^2(\b{r}_3)\bs{\xi}\,.
		\end{equation} Using definitions (\ref{eq:r3d}) and (\ref{eq:attitude_err}), (\ref{eq:b1_conv}) can be rewritten as
		\begin{equation}
			\widehat{\b{b}}_1 \rightarrow \b{b} + ||\bs{\xi}||\b{S}(\b{r}_3)\b{z_r}\,.
		\end{equation} Finally, we note that $\b{z_r}\rightarrow \b{0}$, which implies that $\widehat{\b{b}}_1 \rightarrow \b{b}$, thus concluding our proof.
\end{proof}

\begin{remark}
	Convergence of the estimates $\widehat{\b{b}}_2$ and $\widehat{\b{b}}_3$ to the true value is dependent on the existence of persistent excitation from the reference trajectory, otherwise exhibiting a bounded estimation error. This is characteristic of adaptive backstepping design \cite{Krstic1995}.
\end{remark}
\color{black}
\begin{remark}\label{remark:theorem1}
	Theorem \ref{th:main} establishes stability results for the single rigid body model with thrust-vector actuation - a generic model typically used for rocket-like vehicles. In the context of the \textit{QuadRocket}, this model is used as a first approximation to the coupled dynamics of the system at hand, where the quadrotor is seen as an actuator which can instantaneously provide a thrust force at a given angle with respect to the main body. The validity of this approximation, which is used for control design purposes, was verified both in simulation and during flight testing.
\end{remark}

\subsection{Quadrotor Attitude Control}

With the virtual thrust-vector input already defined, we are left with the task of designing an appropriate control law for the angular velocity command given to the quadrotor, $\bs{\omega}_q$. The goal is that its attitude tracks the desired thrust direction, while providing the corresponding magnitude, thereby solving Problem \ref{prob:quadrotor_control}. Formally, we introduce a desired thrust-vector $\b{u}_d \in \mathbb{R}^3$, expressed in $\{B\}$, which corresponds to the virtual control input defined in (\ref{eq:control_law}). From the desired thrust-vector, the thrust input to the quadrotor immediately follows as
\begin{equation}\label{eq:thrust_input}
	T \coloneq ||\b{u}_d||\,.
\end{equation}
Then, we recall (\ref{eq:tvcinput}) and note that the actual force produced by the quadrotor, expressed in $\{B\}$, can be written as
\begin{equation}\label{eq:actual_u}
	\b{u} = -T\b{R}^\mathsf{T}\b{r}_{3q}\,,
\end{equation}
where $\b{r}_{3q} \coloneq \bs{\pi}_3\left(\b{R}_q\right)\in \mathbb{S}^2$ is the reduced attitude representation of the quadrotor. Expression (\ref{eq:actual_u}), together with definition (\ref{eq:thrust_input}), allows us to extract a desired direction for $\b{r}_{3q}$ as
\begin{equation} \label{eq:r3dq}
	\overline{\b{r}_{3dq}} \coloneq - \b{R}\frac{\overline{\b{u}_d}}{||\overline{\b{u}_d}||} \in \mathbb{S}^2\,,\hspace{10pt} \overline{\b{u}_d}\neq\b{0}\,,
\end{equation}
where $\overline{\b{u}_d}$ is the output of a first order filter applied to $\b{u}_d$, designed as
\begin{equation}\label{eq:filter}
	\dot{ \overline{\b{u}_d}}\,\tau_s + \overline{\b{u}_d} = \b{u}_d\,,\hspace{15pt}  \overline{\b{u}_d}(t_0) = \b{u}_d(t_0)\,,
\end{equation}
in which $\tau_s$ is the time constant and $t_0$ is the initial time. The use of the first order filter is borrowed from Dynamic Surface Control theory \cite{Swaroop2000} and will allow us to design a tracking controller without the need of computing the explicit time derivative of $\b{u}_d$. Let us then define a reduced attitude tracking error for the quadrotor as
\begin{equation}
	\b{e}_q \coloneq \b{S}(\overline{\b{r}_{3dq}})\b{r}_{3q} \in T_{\b{r}_{3q}}\mathbb{S}^2 \cap  T_{\b{r}_{3dq}}\mathbb{S}^2\,,
\end{equation}
and propose the candidate Lyapunov function
\begin{equation}\label{eq:Vq}
	V_q \coloneq 1 - \overline{\b{r}_{3dq}}^\mathsf{T}\b{r}_{3q}\,.
\end{equation}
Its time derivative can be written as
\begin{equation}
	\dot{V}_q  =  -W_q +\b{e}^\mathsf{T}_q\left(k_q\b{e}_q + \bs{\Omega}_{12q} - \bs{\Omega}_{12dq}\right)\,,
\end{equation}
where $W_q\coloneq k_q\b{e}^\mathsf{T}_q\b{e}_q \in \mathbb{R}^+_0$ is an auxiliary non-negative term, $k_q \in \mathbb{R}^+$ is a positive constant gain, $\bs{\Omega}_{12q} \coloneq \b{S}(\b{r}_{3q})\dot{\b{r}}_{3q} \in T_{\b{r}_{3q}}\mathbb{S}^2$, and  $\bs{\Omega}_{12dq} \coloneq \b{S}(\overline{\b{r}_{3dq}})\dot{\overline{\b{r}_{3dq}}} \in T_{\overline{\b{r}_{3dq}}}\mathbb{S}^2$. Thus, by setting
\begin{equation}\label{eq:om12qlaw}
	\bs{\Omega}_{12q} \coloneq -k_q\b{e}_q - \b{S}^2(\b{r}_{3q})\bs{\Omega}_{12dq}\,,
\end{equation}
the time derivative of $V_q$ becomes
\begin{equation}\label{eq:Vq_dot}
	\dot{V}_q = - W_q<0 \quad \forall\, \b{e}_q \neq \b{0}\,.
\end{equation}

Recalling (\ref{eq:om12}) and (\ref{eq:om}), we have that 
\begin{equation}
	\bs{\Omega}_{12q} = 	-\b{R}_q\b{S}^2(\b{e}_3)\,\bs{\omega}_q\,, 
\end{equation}
which means that $\bs{\Omega}_{12q}$ can be set through $\omega_{1q}$ and $\omega_{2q}$ -- the $x$ and $y$ components of $\bs{\omega}_q$. Thus, the remaining component, $\omega_{3q}$, can be used to drive $\omega_3$ to zero such that the second part of condition (\ref{eq:delta_condition}) is verified. The universal joint connecting both bodies imposes the restriction $\omega_3 = \omega_{3q}$, allowing us to simply set $\omega_{3q} = 0$. Therefore, the control law to govern the angular velocity of the quadrotor is set as
\begin{equation}\label{eq:omegaq_law}
	\bs{\omega}_q \coloneq \b{R}^\mathsf{T}_q\left(-k_q\b{e}_q - \b{S}^2(\b{r}_{3q})\bs{\Omega}_{12dq}\right)\,.
\end{equation}

\begin{proposition}\label{prop:quad}
	Consider the quadrotor attitude kinematics in (\ref{eq:quad_kinematics}), and that the quadrotor perfectly tracks the angular velocity command, $\bs{\omega}_q$. Then, the feedback law (\ref{eq:omegaq_law}) renders the point $\b{r}_{3q} = \overline{\b{r}_{3dq}}$ almost global asymptotically stable. Moreover, the actual thrust-vector input $\b{u}$ converges asymptotically to a ball $\mathcal{B} \coloneq \{\b{u}\in\mathbb{R}^3:||\b{u} - \b{u}_d||\leq r \}$ centered around $\b{u}_d$ of radius
	
	\begin{equation}
		r = \tau_s\,\mathop{\sup}\limits_{t \ge 0} \|\dot{	\b{u}}_d(t)\|\,.
	\end{equation}
\end{proposition}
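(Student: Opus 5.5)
The proof has two parts. First we show that $\b{r}_{3q}$ converges almost globally to $\overline{\b{r}_{3dq}}$ using $V_q$ in (\ref{eq:Vq}). Then we show that the force mismatch is governed by the Dynamic Surface Control filter (\ref{eq:filter}).

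\emph{Attitude part.} Take $V_q = 1-\overline{\b{r}_{3dq}}^\mathsf{T}\b{r}_{3q}\in[0,2]$. It vanishes only at $\b{r}_{3q}=\overline{\b{r}_{3dq}}$ and is maximal at the antipodal point. Under perfect tracking of (\ref{eq:omegaq_law}), the kinematics (\ref{eq:quad_kinematics}) give $\bs{\Omega}_{12q}=-\b{R}_q\b{S}^2(\b{e}_3)\bs{\omega}_q$. Because $\b{S}^2(\b{r}_{3q})$ projects onto $T_{\b{r}_{3q}}\mathbb{S}^2$ and both $\b{e}_q$ and $\b{S}^2(\b{r}_{3q})\bs{\Omega}_{12dq}$ already lie in that plane, this is exactly the law (\ref{eq:om12qlaw}). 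It remains to check that $\b{e}_q^\mathsf{T}(\bs{\Omega}_{12dq}+\b{S}^2(\b{r}_{3q})\bs{\Omega}_{12dq})=\b{e}_q^\mathsf{T}\b{r}_{3q}\b{r}_{3q}^\mathsf{T}\bs{\Omega}_{12dq}=0$, which holds since $\b{e}_q\perp\b{r}_{3q}$. Hence (\ref{eq:Vq_dot}) gives $\dot V_q=-k_q\|\b{e}_q\|^2=-k_q V_q(2-V_q)$.

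This scalar ODE is autonomous in $V_q$ and can be solved explicitly. Every solution with $V_q(0)<2$ tends to $0$ exponentially, while $V_q=2$ is an unstable equilibrium. Therefore $\b{r}_{3q}\to\overline{\b{r}_{3dq}}$ from every initial condition except the measure-zero antipodal set, which is the almost global asymptotic stability claim. Local stability follows from $V_q$ being a positive definite, decreasing function near the target. This step needs $\overline{\b{r}_{3dq}}$ to be well defined, i.e.\ $\overline{\b{u}_d}\neq\b 0$, and $\dot{\overline{\b{r}_{3dq}}}$ to be available. The latter is provided by the filter: $\dot{\overline{\b{u}_d}}=(\b{u}_d-\overline{\b{u}_d})/\tau_s$ avoids differentiating $\b{u}_d$, and $\dot{\b R}$ is known from (\ref{eq:kinematics}).

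\emph{Force part.} Split the mismatch as
\[
\b{u}-\b{u}_d=\bigl(\b{u}-\overline{\b{u}_d}\bigr)+\bigl(\overline{\b{u}_d}-\b{u}_d\bigr).
\]
For the first term, with $T=\|\b{u}_d\|$ the actual force is $\b u=-T\b R^\mathsf{T}\b r_{3q}$. As $\b r_{3q}\to\overline{\b r_{3dq}}$, this tends to $\|\b u_d\|\,\overline{\b u_d}/\|\overline{\b u_d}\|$. Its distance to $\overline{\b u_d}$ is $\bigl|\|\b u_d\|-\|\overline{\b u_d}\|\bigr|\le\|\b u_d-\overline{\b u_d}\|$, so asymptotically $\|\b u-\b u_d\|\le 2\|\overline{\b u_d}-\b u_d\|$. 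If instead one reads the thrust as applied along the filtered direction with filtered magnitude, the first term vanishes and the bound is simply $\|\overline{\b u_d}-\b u_d\|$. I would present this magnitude/direction bookkeeping carefully so that the constant comes out as stated in the proposition.

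For the second term, define the filter error $\b y\coloneq\overline{\b u_d}-\b u_d$. Then $\dot{\b y}=-\b y/\tau_s-\dot{\b u}_d$ with $\b y(t_0)=\b 0$. Variation of constants gives
\[
\|\b y(t)\|\le\int_{t_0}^t e^{-(t-s)/\tau_s}\|\dot{\b u}_d(s)\|\,ds\le\tau_s\sup_{t\ge0}\|\dot{\b u}_d\|,
\]
and the supremum is finite because Theorem~\ref{th:main} guarantees bounded closed-loop signals and the reference is $\mathcal C^4$. Combining the two terms yields convergence of $\b u$ to the ball $\mathcal B$ of radius $r$.

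\emph{Main obstacle.} The hardest step is the time-varying target $\overline{\b r_{3dq}}(t)$ in the attitude part. Almost global stability must be argued for a non-autonomous system, and that is why the scalar reduction $\dot V_q=-k_qV_q(2-V_q)$ is the key device: it is independent of $t$. A second delicate point is matching the exact radius $r$ once the thrust magnitude mismatch is included.
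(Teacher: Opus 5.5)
Your route is the paper's: the same $V_q$, the same split of $\mathbf u-\mathbf u_d$ through $\overline{\mathbf u_d}$, and the same variation-of-constants bound $\|\overline{\mathbf u_d}-\mathbf u_d\|\le\tau_s\sup\|\dot{\mathbf u}_d\|$ on the filter error. Your attitude step is tighter than the paper's. The appendix only notes that $\dot V_q=-k_q\|\mathbf e_q\|^2$ is negative definite in $\mathbf e_q$, which for a moving target $\overline{\mathbf r_{3dq}}(t)$ still needs an extra argument. Your identity $\|\mathbf e_q\|^2=V_q(2-V_q)$ instead gives an autonomous scalar ODE. Its explicit solution yields exponential convergence from every $V_q(t_0)<2$, however fast the target moves, and shows that the antipodal configuration is never reached.

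The gap is the radius, and the ``careful bookkeeping'' you defer cannot produce $r$ under the paper's definitions. The paper closes this step by asserting that $\mathbf r_{3q}\to\overline{\mathbf r_{3dq}}$ implies $\mathbf u\to\overline{\mathbf u_d}$ ``by definition''. That holds only if the thrust magnitude is $\|\overline{\mathbf u_d}\|$. With $T=\|\mathbf u_d\|$ as in (\ref{eq:thrust_input}), your limit $\mathbf u\to\|\mathbf u_d\|\,\overline{\mathbf u_d}/\|\overline{\mathbf u_d}\|$ is correct, and the factor $2$ is sharp in general. To see this, let $\mathbf u_d$ repeatedly sweep past (not through) the origin at speed $M$. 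The filtered signal lags by about $\tau_s M$, so at suitable instants $\overline{\mathbf u_d}$ is nearly antiparallel to $\mathbf u_d$ with $\|\overline{\mathbf u_d}\|\ll\|\mathbf u_d\|\approx\tau_s M$. The limiting force is then at distance $\approx 2\tau_s M=2r$ from $\mathbf u_d$.

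So you must commit to one of two statements. Either take $T=\|\overline{\mathbf u_d}\|$: your first term vanishes and you get exactly $r$, which is what the paper's proof tacitly uses. Or keep $T=\|\mathbf u_d\|$ and prove radius $2r$. A sharper version follows from the Dunkl--Williams inequality: asymptotically $\|\mathbf u-\mathbf u_d\|\le\frac{2\|\mathbf u_d\|}{\|\mathbf u_d\|+\|\overline{\mathbf u_d}\|}\,\|\overline{\mathbf u_d}-\mathbf u_d\|$, which is $\approx r$ when $\|\overline{\mathbf u_d}\|\approx\|\mathbf u_d\|$, as in hover-dominated flight.

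Separately, your claim that $\sup_t\|\dot{\mathbf u}_d\|<\infty$ follows from Theorem~\ref{th:main} does not hold. That theorem assumes idealized actuation $\mathbf u\equiv\mathbf u_d$. Moreover, $\mathbf u_d$ already contains $\ddddot{\mathbf p}_d$ through $\overline{\dot{\boldsymbol{\Omega}}}_{12c}$, so $\dot{\mathbf u}_d$ need not exist for a $\mathcal C^4$ reference. As in the paper, state finiteness of this supremum, together with $\overline{\mathbf u_d}\neq\mathbf 0$, as explicit assumptions.
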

\begin{proof}
The proof can be found in Appendix \ref{app:prop1proof}.
\end{proof}

\begin{remark}
Equation (\ref{eq:filter}) acts as a first-order low-pass filter on the desired thrust-vector, $\b{u}_d$, for which the cut-off frequency can be adjusted through the time constant $\tau_s$. As established in Proposition \ref{prop:quad}, the time constant will have an impact on the thrust-vector tracking error $||\b{u} - \b{u}_d||$. Hence, there exists a trade-off between minimizing the tracking error and limiting high-frequency noise.
\end{remark}

Finally, Fig. \ref{fig:control_diagram} presents an architecture diagram of the proposed integrated trajectory tracking solution, resulting from the combination of the virtual thrust-vector controller with the quadrotor control law.

\section{COMPUTER SIMULATION}

Prior to its implementation on the real vehicle, the overall control scheme for trajectory tracking was tested in a Matlab/Simulink simulation environment. The proposed solution was verified using the dynamics and kinematics model presented on Section \ref{sec:dyn_model}, corresponding to the numerical implementation of (\ref{eq:kinematics})-(\ref{eq:dynamics}). The built-in variable-step \textit{ode45} solver was used with a maximum allowed step size of 0.01 seconds\footnote{To ensure consistent numerical integration of \eqref{eq:r3_dot_eq}, a standard projection method \cite{Hairer2013} was used.}.

\subsection{Bump function-based reference trajectories}
To ensure that reference trajectories have the necessary smoothness, class $\mathcal{C}^\infty$ bump functions \cite{Tu2011} were used to directly define the time evolution of the desired inertial velocity. To that end, a smooth transition function is introduced: 
\begin{equation}
	f (s) \coloneq \begin{cases}
		0\,,\hspace{72.5pt}s\leq0\\
		\displaystyle\frac{e^{-\frac{1}{s}}}{e^{-\frac{1}{s}}+e^{-\frac{1}{1-s}}}\,,\hspace{20pt}0<s<1\\[10pt]
		1\,,\hspace{72.5pt}s\geq1		
	\end{cases}\,.
\end{equation}
Using the smooth transition function, $f(s)$, and defining $t_s$, $T_1$, and $T_2 \in \mathbb{R}$ as the start time, transition interval, and plateau interval, respectively, a generic bump function $\Psi:\mathbb{R}_0^+\rightarrow\mathbb{R}_0^+$ is constructed according to
\begin{equation}\label{eq:bump_function}
	\Psi (t\,|\,t_s,\, T_1,\, T_2) \coloneq 1-f\left(\textstyle\frac{(t-t_s-b)^2-a^2}{b^2-a^2}\right),
\end{equation}
where $a = T_2/2$ and $b = T_1 +a$. Figure \ref{fig:bump_function} displays an example bump function with the relevant time instants and intervals identified.
\begin{figure}[t]
	\centering
	\includegraphics{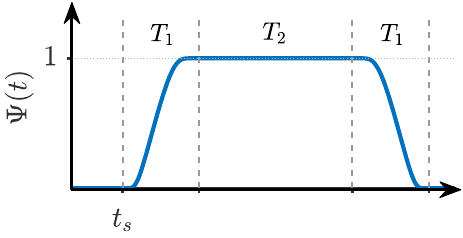} 
	\caption{Representation of the bump function.}
	\label{fig:bump_function}
\end{figure}

\subsection{Setup}

The reference trajectory, which was later used during flight arena testing, was derived from a desired inertial velocity profile (Fig. \ref{fig:ref_vel}) constructed using the generic bump function (\ref{eq:bump_function}). A conservative trajectory design was adopted to account for the prototype nature of the vehicle. The vehicle is required to perform an initial vertical climb, followed by a sideways motion, after which the vehicle simultaneously descends and moves towards the initial desired position. The exact inertial velocity profile is described by
\begin{equation}\label{eq:vel_profile}
	\b{\dot{p}}_d = \begin{bmatrix}
		0 \\[0.1cm]
		-0.5\Psi(t\,|11,2.9,0.2) + 	0.5\Psi(t\,|20,2.9,0.2)\\[0.1cm]
		-0.4\Psi(t\,|1,3.4,0.2) + 	0.28\Psi(t\,|20,4.9,0.2)
	\end{bmatrix},
\end{equation}
from which the desired inertial position can be integrated from a given initial condition, and the subsequent time derivatives can be explicitly derived.

For this simulated scenario, the initial position of the vehicle was set to $\b{p}(0) = \left[\,0.3\:\:-0.4\:\:-0.5\,\right]^\mathsf{T}\,$m, while the desired initial position of the control point was set to $\b{p}_d(0) = \left[\,0\:\:0\:\:-1\,\right]^\mathsf{T}\,$m. An initial inclination was induced according to $\b{R}(0) = \b{I} + \sin\theta\,\b{S}(\b{a}) +(1-\cos\theta)\b{S}^2(\b{a})$ with $\theta=3^\circ$ and $\b{a} = \left[\,-1\:\:-0.5\:\:0\,\right]^\mathsf{T}$. The quadrotor was initially aligned with the inertial frame, i.e. $\b{R}_q(0)=\b{I}$, leading to a misalignment with respect to the main body. As for the constant disturbance, its value was imposed as $\b{b} = \left[\,0.1\:\:-0.2\:\:-0.15\,\right]^\mathsf{T}\,$m/s$^2$. Additionally, a time delay of $0.02\,$s was introduced in the thrust and angular velocity commands to account for transmission delays and quadrotor response times. The system parameters are as follows: $m = 1.55\,$kg, $g=9.81\,$m.s$^{-2}$, $j_\perp =0.15\,$kg.m$^2$, and $L = 0.5\,$m. Table \ref{tab:sim_gains} collects the set of gains used in simulation.

\begin{table}[htb]
	\caption{Control gains used in simulation.}
	\label{tab:sim_gains}
	\centering
	\renewcommand{\arraystretch}{1.2} 
	\setlength{\tabcolsep}{6pt} 
	\begin{tabular}{>{\columncolor{gray!8}}c c}
		\rowcolor{gray!20}
		\textbf{Parameter} & \textbf{Value} \\
		\hline
		$\b{K_p},\,\b{K_v}$ & $2\,\b{I},\,1.5\,\b{I}$ \\
		$k_r,\,h_r$ & 60, 6 \\
		$k_\Omega,\,h_\Omega$ & 25, 15 \\
		$\bs{\Lambda}_1,\,\bs{\Lambda}_2,\,\bs{\Lambda}_3$ & $1\,\b{I},\,2\,\b{I},\,0.01\,\b{I}$ \\
		$k_q,\,\tau_s$ & 18, 0.03 \\
		\hline
	\end{tabular}
\end{table}

\subsection{Simulation Results}

\begin{figure}[t]
	\centering
	\includegraphics{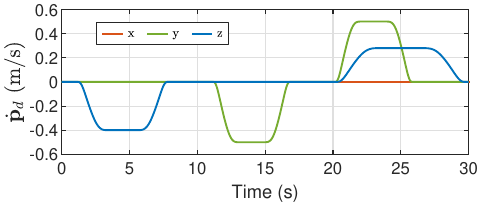} 
	\caption{Reference inertial velocity profile.}
	\label{fig:ref_vel}
\end{figure}

Figure \ref{fig:sim_traj} displays the 3-dimensional representation of the actual and reference trajectories. Starting with a significant position error and a counterproductive initial tilt, the \textit{QuadRocket} is able to converge to the desired trajectory and maintain accurate tracking throughout the flight.
\begin{figure}[t!]
	\centering
	\includegraphics{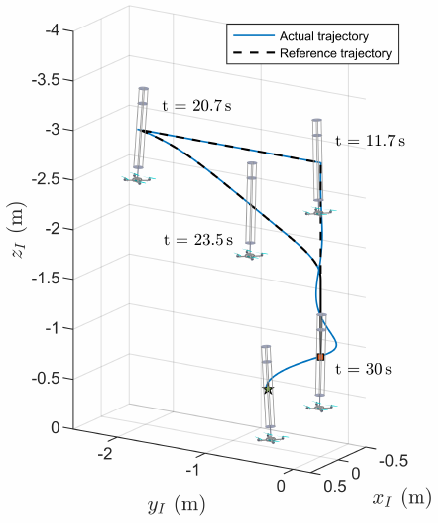} 
	\caption{Simulated trajectory tracking visualization with accurate representations of the vehicle at given points. The green star and red square represent the starting and final positions, respectively.}
	\label{fig:sim_traj}
\end{figure}
\begin{figure}[t]
	\centering
	\includegraphics[width=\columnwidth]{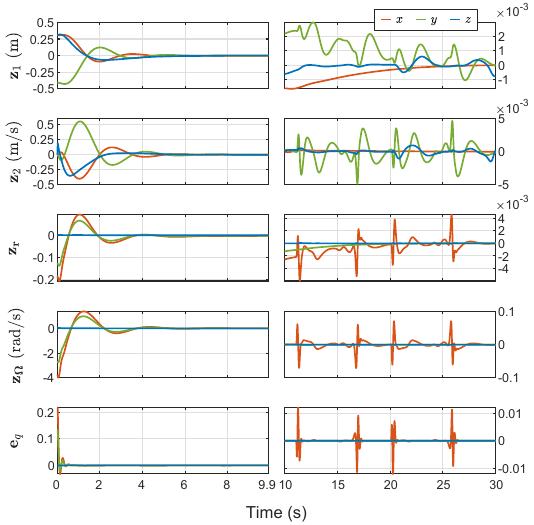} 
	\caption{Top to bottom: time evolution of $\b{z}_1$, $\b{z}_2$, $\b{z_r}$, $\b{z}_{\bs{\Omega}}$, and $\b{e}_q$. Left subplots: initial transient and convergence performance. Right subplots: steady-state performance.}
	\label{fig:sim_err}
\end{figure}

To quantitatively demonstrate the trajectory tracking performance, the time evolution of all errors is presented in Fig. \ref{fig:sim_err}. The left-side plots attest for the asymptotic convergence of position, velocity, and attitude errors to the origin during the first seconds of the flight. Even though asymptotic convergence was only proven for idealized thrust-vector actuation, the fast response of the quadrotor allows for the extension of this property to the interconnected system. Note that, appropriate tuning of the quadrotor attitude gain, $k_q$, leads to a comparatively faster convergence of the quadrotor attitude error, $\b{e}_q$. The right-side plots show the residual errors throughout the remainder of the flight. Noticeably, the step-like shape of the velocity bump function induces spikes in the attitude errors at the transition times, which likely propagate from the quadrotor attitude tracking error.

The \textit{QuadRocket} was able to track the desired trajectory in the presence of an unknown constant disturbance. Figure \ref{fig:sim_adapt} displays the time evolution of $\widehat{\b{b}}_1$, $||\widetilde{\b{b}}_1||$, $||\widetilde{\b{b}}_2||$, and $||\widetilde{\b{b}}_3||$. The first estimate on the disturbance, $\widehat{\b{b}}_1$, converges very close to the true value, as expected for the idealized model, while $\widehat{\b{b}}_2$ and $\widehat{\b{b}}_3$ exhibit steady state error.
\begin{figure}[t]
	\centering
	\includegraphics[width=\columnwidth]{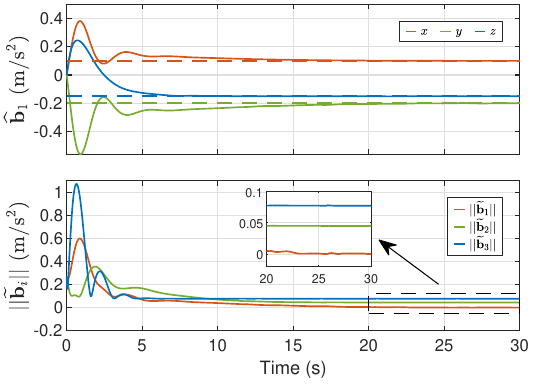} 
	\caption{Top: evolution of $\widehat{\b{b}}_1$. Bottom: evolution of $||\widetilde{\b{b}}_1||$, $||\widetilde{\b{b}}_2||$, and $||\widetilde{\b{b}}_3||$.}
	\label{fig:sim_adapt}
\end{figure}

Finally, the actuation signals are displayed in Fig. \ref{fig:sim_actuation}. The hover thrust magnitude stabilizes at a value below the weight due to the negative disturbance in the $z$-axis, a mark of the adaptive nature of the controller. As for the angular velocity command, after the initial stabilization effort, it is only excited at the transition intervals. The bottom plot shows the actual (provided by the quadrotor) and desired thrust-vector components orthogonal to $\b{r}_3$. The actual thrust-vector closely tracks the desired values,
\begin{figure}[t]
	\centering
	\includegraphics[width=\columnwidth]{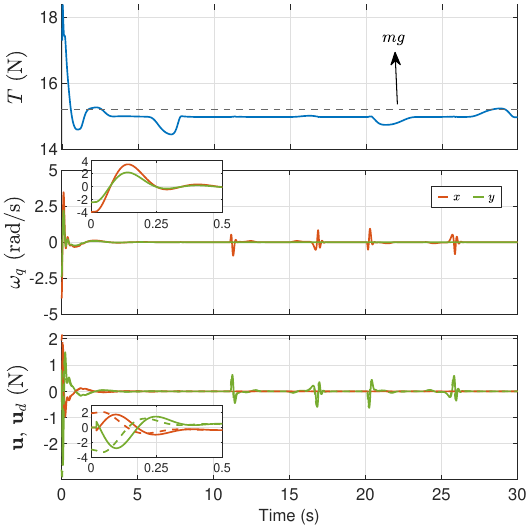} 
	\caption{Actuation signals. Top to bottom: Thrust magnitude, quadrotor angular velocity command, and actual vs desired thrust-vector components orthogonal to $\b{r}_3$.}
	\label{fig:sim_actuation}
\end{figure}
once again indicating that idealizing the quadrotor as a thrust-vector actuator is not only a reasonable assumption but also an effective one. The zoomed section in the first 0.5$\,$s attests for the fast convergence, limited by the imposed 0.02$\,$s delay. Overall, simulation results provide an empirical confirmation of the closed-loop stability of the interconnected system, as stated in Remark \ref{remark:theorem1}.
\section{FLIGHT EXPERIMENT AND RESULTS}

The viability of the proposed control scheme in computer simulation paves the way for the next logical step: to test it on the actual vehicle in a controlled environment - an indoor flight arena equipped with a motion capture system.

\subsection{Setup}

\begin{figure}[t]
	\centering
	\includegraphics[width=\columnwidth]{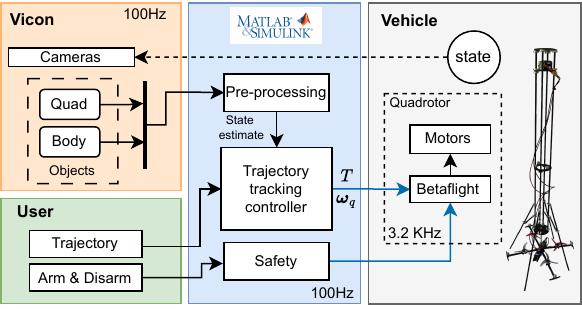} 
	\caption{Experimental setup. Blue arrows indicate RC communication links.}
	\label{fig:experimental_setup}
\end{figure}
\begin{figure*}[t!]
	\centering
	\includegraphics[width=\textwidth]{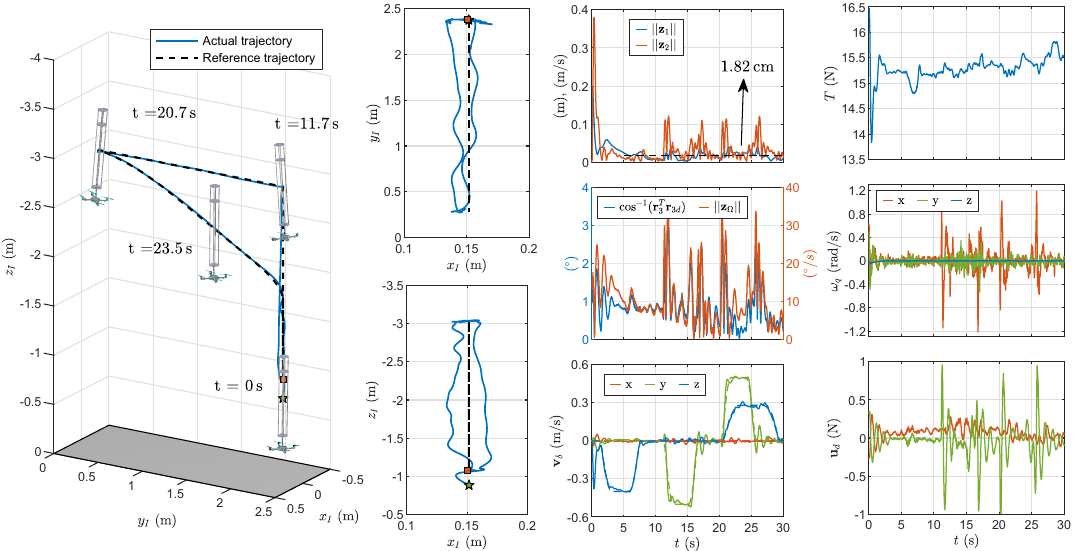} 
	\caption{Experimental trajectory tracking results. On the left, actual and reference trajectories with snapshots of the vehicle. Center-left column displays two planar views. Center-right column displays the evolution (top to bottom) of i) the norm of position and velocity errors, ii) the angle between $\b{r}_3$ and $\b{r}_{3d}$ and the norm of the angular velocity error, and iii) desired and actual inertial velocity. On the right (top to bottom), evolution of the thrust magnitude, quadrotor angular velocity command, 
    and components of the desired thrust-vector orthogonal to $\b{r}_3$.}
	\label{fig:experimental_results}
\end{figure*}

The experimental setup developed at the SCORE Laboratory, University of Macau, is implemented in a MATLAB/Simulink environment that enables seamless integration of sensors, control algorithms, and vehicle communication systems. Figure \ref{fig:experimental_setup} schematizes the experimental setup, describing the information flow between each subsystem. The states of the \textit{QuadRocket} are measured using a VICON motion capture system comprising a total of 30 Vantage cameras arranged in an 11 × 5 × 7 m volume. This high-performance system is able to operate with sub-millimeter accuracy at 100$\,$Hz, which allows for an accurate tracking of the reflective markers attached to the main body and quadrotor. Position and attitude measurements of each object are received in Simulink via Ethernet connection and pre-processed to recover the full state of the \textit{QuadRocket} - $\b{p}_\delta$, $\b{v}_\delta$, $\b{r}_3$, and $\bs{\Omega}_{12}$. We stress that the VICON motion capture system is a measurement tool, not a control crutch, meaning that it is used as a high-precision ground-truth measurement tool, not as a substitute for onboard sensing in the control loop. Moreover, the angular velocity of the rigid-bodies is obtained implicitly from a first-order Euler discrete differentiation of their rotation matrices, which means that the controller receives a state ``estimate" at a fixed rate with associated noise and latency, mirroring a real sensor suite.

The trajectory tracking controller is implemented in Simulink and runs at a frequency of 100$\,$Hz, using the state estimate and the user-defined trajectory to compute the inputs to the quadrotor, i.e., $T$ and $\bs{\omega}_q$. These inputs are sent to the quadrotor through an RC communication link at 100$\,$Hz and interpreted by the onboard flight computer running the Betaflight software. The onboard flight computer adjusts the current sent to each motor at the frequency of 3.2$\,$kHz so as to track the commanded thrust and angular velocity. The RC link contains an additional arm/disarm signal which is sent by the user from Simulink to start or abort the mission. The abort signal can also be automatically triggered if given state-dependent conditions are verified, such as user-defined attitude and/or position limits. The desired trajectory for the test flight was derived from the inertial velocity profile defined in (\ref{eq:vel_profile}). However, the desired initial position was set to $\b{p}_d(0) = \b{p}_\delta(0) - 0.2\,\b{e}_3\,$m so that the desired trajectory would adjust to the initial position (as placed in the arena) and that the vehicle would rapidly gain some elevation with respect to the floor. Apart from the adaptation gains, all other parameters were kept equal to the simulation values (Tab. \ref{tab:sim_gains}). More conservative values were used for the adaptation gains, prioritizing the first adaptation layer: $\b{\Lambda}_1 = 0.5\,\b{I}$, $\b{\Lambda}_2 =0.02\,\b{I}$, and $\b{\Lambda}_3 =0.001\,\b{I}$.

\subsection{Experimental Results}

The main trajectory tracking results of the flight experiment conducted at the indoor flight arena are shown in Fig. \ref{fig:experimental_results}. Additionally, a video of this same experiment is available at: \url{https://youtu.be/V5t_TqCEJUo}. The \textit{QuadRocket} was able to track the desired trajectory with centimeter-level precision and moderate actuation effort. In fact, after an initial transient of approximately 5$\,$s, the mean value of $||\b{z}_1||$ is calculated at 1.82$\,$cm. As verified in simulation, the transient intervals of the reference velocity bumps induce spikes which propagate from the quadrotor tracking error (Fig. \ref{fig:quad_tracking}) up to the position error. Most noticeably, the linear and angular velocity errors, $\b{z}_2$ and $\b{z}_{\bs{\Omega}}$, reach maximum values during these periods. The impact of high frequency noise coming from linear and angular velocity estimates is also visible, especially in the angular velocity command - a marked difference between the noiseless simulated results and the experimental ones.  When the vehicle starts or stops moving, the actuation effort is increased as its underactuated nature requires a tilt towards or against the direction of motion, respectively. Moreover, by analyzing the time evolution of $\b{v}_\delta$ during these instants, it is possible to infer that non-minimum phase behavior, although minimized, is not completely canceled. This fact indicates that some error was present in the values of the physical parameters required to compute $\delta$. Indeed, the accurate tracking results were obtained in the inevitable presence of model uncertainty, as the one posed by the diagonal inertia matrix assumption, attesting for the robustness of the proposed tracking law and for the validity of the assumptions used to simplify control design.

The high precision tracking with extremely reduced steady-state errors was enabled by the adaptive nature of the proposed control scheme. The time evolution of $\widehat{\b{b}}_1$, $\widehat{\b{b}}_2$, and  $\widehat{\b{b}}_3$ is shown in Fig. \ref{fig:adapt_exp}. 
\begin{figure}[htb]
	\centering
	\includegraphics[width=\columnwidth]{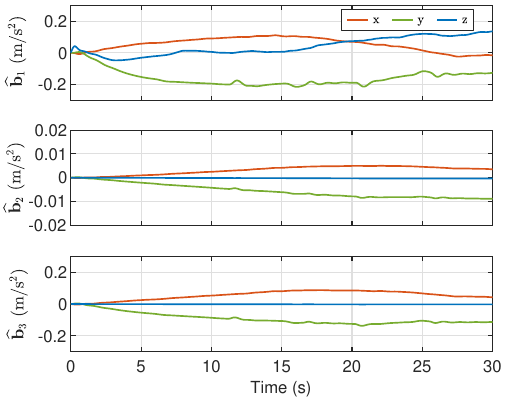} 
	\caption{Time evolution of $\widehat{\b{b}}_1$, $\widehat{\b{b}}_2$, and  $\widehat{\b{b}}_3$.}
	\label{fig:adapt_exp}
\end{figure}
A considerable disturbance value was estimated by the first adaptation layer, $\widehat{\b{b}}_1$, with emphasis on the $y$-axis, which is attributed to a possible misalignment between the quadrotor attachment point and the center of mass of the vehicle. The increase on the $z$-axis value towards the end of the flight reflects the thrust performance loss with the decrease of the battery voltage. In fact, the variation of the estimated values throughout the flight indicates the presence of time-varying and state-dependent disturbances, which the controller was able to counteract even though theoretical guarantees are built on the constant disturbance assumption. Additionally, a comparatively lower magnitude of $\widehat{\b{b}}_2$ is verified, which indicates that the corresponding gain was overly conservative.

Finally, and as predicted by simulation, the quadrotor attitude tracking is sufficiently fast so that it can be considered as a thrust-vector actuator to the main body, thus confirming the applicability of this platform as a rocket prototype. The top plot in Fig. \ref{fig:quad_tracking} demonstrates this fact by showcasing the fast and accurate tracking of $\overline{\b{r}_{3dq}}$ by $\b{r}_{3dq}$. This property allows for the extension of closed-loop stability to the interconnected system, validating the single rigid-body assumption as an effective one for control design purposes. The bottom plot compares the angular displacement between $\b{r}_{3q}$ and $\b{r}_{3qd}$ - a measure of the actual tracking error - against the angular displacement between $\b{r}_{3q}$ and $\overline{\b{r}_{3dq}}$ - a measure of the error used in dynamic surface control. The actual error has a larger steady-state value and the tracking performance is significantly worse during transients, highlighting the downfalls of relying on dynamic surface control to avoid explicit differentiation of the desired thrust-vector input, $\b{u}_d$. Nonetheless, the use of this control strategy for quadrotor attitude tracking did not compromise the overall trajectory tracking performance and simplified the resulting control scheme.

\begin{figure}[t]
	\centering
	\includegraphics[width=\columnwidth]{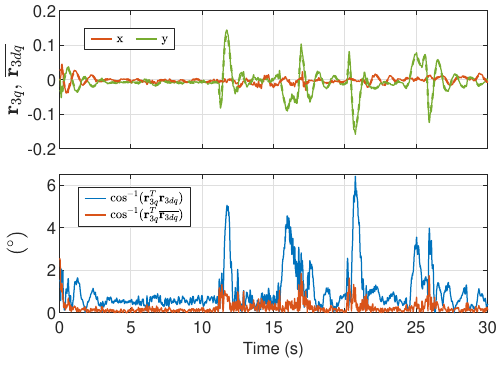} 
	\caption{Quadrotor attitude tracking performance. Top: $\b{r}_{3q}$ vs $\overline{\b{r}_{3dq}}$ - $x$ and $y$ components. Bottom: Angle between $\b{r}_{3q}$ and i) $\overline{\b{r}_{3dq}}$, and ii) $\b{r}_{3dq}$.    }
	\label{fig:quad_tracking}
\end{figure}

\section{CONCLUSION}
This paper introduced \textit{QuadRocket}, a quadrotor-based rocket prototype that serves as a low-cost, low-risk aerial robotic testbed for thrust-vector control of rocket-like vehicles. The coupled quadrotor–cylindrical-body system is modeled as a single axisymmetric rigid body actuated by a vectored force, using a reduced-attitude representation on the two-sphere to explicitly exploit axial symmetry and decouple yaw from thrust-vector direction. On this model, we derived an adaptive backstepping controller that provides almost-global trajectory tracking in the presence of unknown constant disturbances, while a control-point transformation mitigates non-minimum-phase behavior induced by the underactuated dynamics.
To bridge the gap between the virtual thrust-vector input and the real actuation, the quadrotor is treated as a thrust-vector actuator. A dynamic-surface-based attitude controller is then designed to track the desired thrust-vector while accounting for actuation dynamics and avoiding explicit differentiation of virtual control signals. The complete control architecture was first validated in simulation and then experimentally demonstrated in an indoor motion-capture arena, where the \textit{QuadRocket} achieved accurate trajectory tracking and effective disturbance compensation, despite actuation delays and sensing noise.
Overall, the results confirm that a compact quadrotor platform can reliably emulate thrust-vector-controlled rocket dynamics and that the proposed control framework is suitable for experimental validation of advanced guidance and control strategies. Future work will explore more aggressive maneuvers in the presence of external disturbances and parameter variations, integration with onboard state estimation and sensing, and extension of the framework to multi-vehicle cooperative scenarios and other thrust-vector-controlled robotic systems.\vspace*{-.5pc}

\section*{APPENDIX}

\subsection{Auxiliary Computations}\label{app:aux_var}

For completeness and ease of implementation, this first appendix provides all remaining essential formulas required to reproduce the proposed methodology. Intermediate steps and derivations are omitted to maintain conciseness.

\subsubsection{Computation of $\widehat{\bs{\Omega}}_{12d}$}

\begin{subequations}
	\begin{align}
		&\widehat{\bs{\Omega}}_{12d} \coloneq \b{S}(\b{r}_{3d})\,\widehat{\dot{\b{r}}}_{3d}\\
		&\widehat{\dot{\b{r}}}_{3d} \coloneq -||\bs{\xi}||^{-1}\b{S}^2(\b{r}_{3d})\,\widehat{\dot{\bs{\xi}}}\\
		&\widehat{\dot{\bs{\xi}}} \coloneq \widehat{\dot{\bs{\zeta}}} + \dot{\widehat{\b{b}}}_1 - \dddot{\b{p}}_d\\
		&\widehat{\dot{\bs{\zeta}}} \coloneq \left(\b{I} + \b{K_v}\b{K_p} \right)\b{z}_2 + \left(\b{K_p} + \b{K_v} \right)\widehat{\dot{\b{z}}}_2\\
		&\widehat{\dot{\b{z}}}_2 \coloneq g\b{e}_3 + \overline{u_3}\,\b{r}_3 + \widehat{\b{b}}_2 - \ddot{\b{p}}_d
	\end{align}
\end{subequations}

\subsubsection{Computation of $\overline{\dot{\bs{\Omega}}}_{12c}$}

\begin{multline}\label{eq:om12cbardot2}
	\overline{\dot{\bs{\Omega}}}_{12c} \coloneq   \frac{1}{h_r}\left[\b{S}(\b{r}_3)\b{e}\,\b{r}_{3d}^\mathsf{T}\,\overline{\dot{\bs\xi}}+||\bs{\xi}||\left(\b{S}(\dot{\b{r}}_3)\b{e+\b{S}(\b{r}_3)\overline{\dot{\b{e}}}}\right)\right]\\ -\frac{k_r}{h_r}\overline{\dot{\b{z}}}_\b{r} - \left(\b{S}(\dot{\b{r}}_3)\b{S}(\b{r}_3)+\b{S}(\b{r}_3)\b{S}(\dot{\b{r}}_3)\right)\widehat{\bs{\Omega}}_{12d}\\- \b{S}^2(\b{r}_3)\overline{\dot{\bs{\Omega}}}_{12d}\
\end{multline}
where
\begin{subequations}
	\begin{align}
		&\overline{\dot{\b{z}}}_\b{r}\coloneq \b{S}(\b{r}_{3d})\dot{\b{r}}_3-\b{S}(\b{r}_{3})\overline{\dot{\b{r}}}_{3d}\\
		&\overline{\dot{\b{r}}}_{3d} \coloneq -||\bs{\xi}||^{-1}\b{S}^2(\b{r}_{3d})\,\overline{\dot{\bs{\xi}}}\\
		&\overline{\dot{\bs{\xi}}} \coloneq \overline{\dot{\bs{\zeta}}} + \dot{\widehat{\b{b}}}_1 - \dddot{\b{p}}_d\\
		&\overline{\dot{\bs{\zeta}}} \coloneq \left(\b{I} + \b{K_v}\b{K_p} \right)\b{z}_2 + \left(\b{K_p} + \b{K_v} \right)\overline{\dot{\b{z}}}_2\\
		&\overline{\dot{\b{e}}} \coloneq \b{K_p}\,\b{z}_2 + 	\overline{\dot{\b{z}}}_2\\
		&\overline{\dot{\b{z}}}_2 \coloneq 	\widehat{\dot{\b{z}}}_2 - \widehat{\b{b}}_2 + \widehat{\b{b}}_3\\
		&\overline{\dot{\bs{\Omega}}}_{12d}\coloneq -\b{S}(\widehat{\dot{\b{r}}}_{3d})\overline{\dot{\b{r}}}_{3d} + \b{S}(\b{r}_{3d})\overline{\ddot{\b{r}}}_{3d}
	\end{align}
\end{subequations}
and
\begin{multline}
	\overline{\ddot{\b{r}}}_{3d} \coloneq -||\bs{\xi}||^{-1}\b{S}^2(\b{r}_{3d})\,\overline{\ddot{\bs{\xi}}}+ \left(\frac{\b{r}_{3d}^\mathsf{T}\,\overline{\dot{\bs\xi}}}{||\bs{\xi}||^2}\right)\b{S}^2(\b{r}_{3d})	\widehat{\dot{\bs{\xi}}} \\-||\bs{\xi}||^{-1} \left(\b{S}(\overline{\dot{\b{r}}}_3)\b{S}(\b{r}_3)+\b{S}(\b{r}_3)\b{S}(\overline{\dot{\b{r}}}_3) \right)	\widehat{\dot{\bs{\xi}}}
\end{multline}
where
\begin{subequations}\label{eq:om12cbardot_terms}
	\begin{align}
		&\overline{\ddot{\bs{\xi}}} \coloneq \overline{\ddot{\bs{\zeta}}} + \bs\Lambda_1\overline{\dot{\b{e}}} - \ddddot{\b{p}}_d\\
		&\overline{\ddot{\bs{\zeta}}} \coloneq \left(\b{I} + \b{K_v}\b{K_p} \right)\overline{\dot{\b{z}}}_2 + \left(\b{K_p} + \b{K_v} \right)\overline{\ddot{\b{z}}}_2\\
		&\overline{\ddot{\b{z}}}_2 \coloneq \overline{u_3}\,\dot{\b{r}}_3 + \overline{ \dot{\overline{u_3}}}\,\b{r}_3 + \dot{\widehat{\b{b}}}_2 - \dddot{\b{p}}_d\\
		&\overline{ \dot{\overline{u_3}}} \coloneq - \b{r}_3^\mathsf{T}\,\overline{\dot{\bs{\xi}}} - \bs{\xi}^\mathsf{T}\dot{\b{r}}_3
	\end{align}
\end{subequations}

\subsubsection{Computation of $\bs{\Theta}$}
Let us define $\bs\Theta_1$, $\bs\Theta_2$, $\bs\Theta_3$ and $\bs\Theta_4\in\mathbb{R}^{3\times3}$ such that
\begin{subequations}
	\begin{align}
		&\frac{d}{dt}\,\widehat{\dot{\bs{\xi}}} = \overline{\ddot{\bs{\xi}}} + \bs\Theta_1\,\widetilde{\b{b}}_3\,,\\
		&\dot{\b{r}}_{3d} = \overline{\dot{\b{r}}}_{3d} + \bs\Theta_2\,\widetilde{\b{b}}_3\,,\\
		&\frac{d}{dt}\,\widehat{\dot{\b{r}}}_{3d} = 	\overline{\ddot{\b{r}}}_{3d} + \bs\Theta_3\,\widetilde{\b{b}}_3\,,\\
		&\frac{d}{dt}\,	\widehat{\bs{\Omega}}_{12d} = 	\overline{\dot{\bs{\Omega}}}_{12d} + \bs\Theta_4\,\widetilde{\b{b}}_3\,.
	\end{align}
\end{subequations}
Then, we have that
\begin{subequations}
	\begin{align}
		\bs\Theta_1 &\coloneq 
		\begin{aligned}[t]
			&-\bs\Lambda_1 - \b{I} - \b{K_v}\b{K_p} \\
			&+ (\b{K_p} + \b{K_v})\, \b{r}_3\b{r}_3^\mathsf{T} (\b{K_p} + \b{K_v})
		\end{aligned} \\[3pt]
		\bs\Theta_2 &\coloneq 
		\begin{aligned}[t]
			&-\|\bs{\xi}\|^{-1}\b{S}^2(\b{r}_{3d})(\b{K_p} + \b{K_v})
		\end{aligned} \\[3pt]
		\bs\Theta_3 &\coloneq
		\begin{aligned}[t]
			&\|\bs\xi\|^{-1}\left(2\,\widehat{\dot{\bs{\xi}}}\,\b{r}_{3d}^\mathsf{T} -\b{r}_{3d}\,\widehat{\dot{\bs{\xi}}}^\mathsf{T} - (\b{r}_{3d}^\mathsf{T}\,\widehat{\dot{\bs{\xi}}})\b{I}\right)\bs\Theta_2\\
			&- \|\bs\xi\|^{-1}\b{S}^2(\b{r}_{3d})\bs\Theta_1 
			- \frac{\b{S}^2(\b{r}_{3d})}{\|\bs\xi\|^2}\, \widehat{\dot{\bs{\xi}}}\,\b{r}_{3d}^\mathsf{T}\, (\b{K_p} + \b{K_v})
		\end{aligned} \\[3pt]
		\bs\Theta_4 &\coloneq
		\begin{aligned}[t]
			&\b{S}(\b{r}_{3d})\bs\Theta_3 - \b{S}(\widehat{\dot{\b{r}}}_{3d})\bs\Theta_2
		\end{aligned}
	\end{align}
\end{subequations}
and, finally, 
\begin{equation}
	\bs\Theta \coloneq
	\begin{aligned}[t]
		&-\frac{k_r}{h_r}\b{S}(\b{r}_3)\bs\Theta_2 + \b{S}^2(\b{r}_3)\bs\Theta_4  + \frac{||\bs\xi||}{h_r}\b{S}(\b{r}_3) \\
		&+ \frac{1}{h_r}\b{S}(\b{r}_3)\b{e}\,\b{r}_{3d}^\mathsf{T}(\b{K_p} + \b{K_v})\,.
	\end{aligned}
\end{equation}

\subsection{Boundedness of $\ddot{V}$}\label{app:vbound}
The second time derivative of $V$ can be written as 
\begin{equation*}
	\ddot{V} = -2\,\b{z}^\mathsf{T}_1\b{K_p}\dot{\b{z}}_1 - 2\,\b{e}^\mathsf{T}\b{K_v}\dot{\b{e}} -2\,k_r\b{z}^\mathsf{T}_{\b{r}}\dot{\b{z}}_{\b{r}} -2\,k_\Omega\b{z}^\mathsf{T}_{\bs{\Omega}}\dot{\b{z}}_{\bs{\Omega}}\,.
\end{equation*}
Since we have already established that both the tracking and estimation errors are bounded, we must now check for the boundedness of their time derivatives.

Firstly, recall that the reference curves are smooth and have bounded time derivatives. Moreover, in the presence of bounded disturbances, the fact that $\widetilde{\b{b}}$ is bounded, implies the boundedness of $\widehat{\b{b}}_1$, $\widehat{\b{b}}_2$, and $\widehat{\b{b}}_3$. The fact that both $\b{z_1}$ and $\b{e}$ are bounded implies, by definition, the boundedness of $\b{z}_2$, which in turn implies the boundedness of $\dot{\b{z}}_1$ (note that $\dot{\b{z}}_1 = \b{z}_2$). The velocity error dynamics, given by $\dot{\b{z}}_2 = \dot{\b{v}}_\delta - \ddot{\b{p}}_d$, will also be bounded, owing to the fact that the input $\overline{u_3}$, defined in (\ref{eq:u3_bar_law}), is bounded. The boundedness of $\overline{u_3}$ follows from the boundedness of $\bs{\xi}$ in (\ref{eq:xi}), which depends only on terms already verified to be bounded. Immediately, note that $\dot{\b{e}} = \dot{\b{z}}_2 + \b{K_p}\dot{\b{z}}_1$ is bounded. 

Consider now the following time derivatives: $\dot{\bs{\zeta}} = \left(\b{I} + \b{K_v}\b{K_p} \right)\dot{\b{z}}_1 + \left(\b{K_p} + \b{K_v} \right)\dot{\b{z}}_2$, $\dot{\bs{\xi}} = \dot{\bs{\zeta}} + \dot{\widehat{\b{b}}}_1 - \dddot{\b{p}}_d$, $\dot{\b{z}}_{\b{r}} = \b{S}(\dot{\b{r}}_{3d})\b{r}_3 + \b{S}(\b{r}_{3d})\dot{\b{r}}_{3}$, $\dot{\b{r}}_{3d} = - ||\bs{\xi}||^{-1}\b{S}^2(\b{r}_{3d})\dot{\bs{\xi}}$, and $\dot{\b{r}}_3 = - \b{S}(\b{r}_3)\bs{\Omega}_{12}$. Then, we have that $\dot{\bs{\zeta}}$ is bounded, which together with the boundedness of $\dot{\widehat{\b{b}}}_1$ (see (\ref{eq:adapt_1})), implies that $\bs{\dot{\xi}}$ is bounded. In addition, both $\b{r}_3$ and $\b{r}_{3d}$ are unit vectors on the 2-sphere, thus being bounded. Therefore, $\dot{\b{r}}_{3d}$ is also bounded. As for $\dot{\b{r}}_{3}$, it follows from the boundedness of $\b{z}_{\bs{\Omega}}$ that a bounded angular velocity command $\bs{\Omega}_{12c}$ leads to a bounded signal. From (\ref{eq:om12c}) and (\ref{eq:om12dhat}) we have that $\bs{\Omega}_{12c}$ is in fact bounded as it depends on bounded terms only, proving the boundedness of $\dot{\b{r}}_{3}$. It is then possible to conclude that $\dot{\b{z}}_{\b{r}}$ is bounded.

At last, we have left to prove the boundedness of $\dot{\b{z}}_{\bs{\Omega}}$. Consider the following time derivatives: $\ddot{\b{z}}_1 = \dot{\b{z}}_2$, $\ddot{\b{z}}_2 = \dot{\overline{u_3}}\b{r}_3 + \overline{u_3}\dot{\b{r}}_3 - \ddddot{\b{p}}_d$, $\dot{\overline{u_3}} = -\dot{\bs{\xi}}^\mathsf{T}\b{r}_3 -\bs{\xi}^\mathsf{T}\dot{\b{r}}_3 $, $\ddot{\bs{\zeta}} = \left(\b{I} + \b{K_v}\b{K_p} \right)\ddot{\b{z}}_1 + \left(\b{K_p} + \b{K_v} \right)\ddot{\b{z}}_2$, $\ddot{\bs{\xi}} = \ddot{\bs{\zeta}} + \ddot{\widehat{\b{b}}}_1 - \dddot{\b{p}}_d$, and $\ddot{\widehat{\b{b}}}_1 = \bs{\Lambda}_1\dot{e}$, $\ddot{\b{r}}_{3d} = -\frac{d}{dt}\left(||\bs{\xi}||^{-1}\b{S}^2(\b{r}_{3d})\dot{\bs{\xi}}\right)$. It follows that $\ddot{\bs{\xi}}$ is bounded, which in turn confirms the boundedness of $\ddot{\b{r}}_{3d}$. Thus, $\dot{\bs{\Omega}}_{12d} $ is also bounded. From (\ref{eq:adapt2}) and (\ref{eq:Xi}) we have that $ \dot{\widetilde{\b{b}}}_2$ is bounded, allowing us to conclude the boundedness of $\dot{\widehat{\bs{\Omega}}}_{12d}$ and, consequently, of $\dot{\bs{\Omega}}_{12c}$ (see (\ref{eq:om12dhat}) and (\ref{eq:om12c})). Therefore, since $\dot{\b{z}}_{\bs{\Omega}} = \dot{\bs{\Omega}}_{12} - \dot{\bs{\Omega}}_{12c}$, if $\dot{\bs{\Omega}}_{12}$ is bounded, then $\dot{\b{z}}_{\bs{\Omega}} $ will also be bounded. The boundedness of $\dot{\bs{\Omega}}_{12}$  is determined by the input $\bs{\Gamma}_c$ in (\ref{eq:ang_accel_law}), where $	\overline{\dot{\bs{\Omega}}}_{12c} $ is the only signal whose boundedness is left to prove. However, we note from (\ref{eq:om12cbardot2})-\eqref{eq:om12cbardot_terms} that $\overline{\dot{\bs{\Omega}}}_{12c} $ is bounded, thus concluding our proof.

\subsection{Proof of Proposition \ref{prop:quad}}\label{app:prop1proof}

Lyapunov function $V_q$, defined in (\ref{eq:Vq}), is positive semi-definite and its codomain is the compact set  $\mathcal{C} = [0,\,2]$. The limits of the codomain exclusively occur at the points where $\b{e}_q$ goes to zero, i.e, $V_q(\b{r}_{3q}=\overline{\b{r}_{3dq}}) = 0$ and $V_q(\b{r}_{3q}=-\overline{\b{r}_{3dq}})=2$. Moreover, its time derivative (\ref{eq:Vq_dot}) is negative definite about $\b{e}_q$, vanishing when $\b{r}_{3q}=\pm\overline{\b{r}_{3dq}}$. Therefore, for any initial condition in the set $\mathcal{I} = \{\b{r}_{3q} \in \mathbb{S}^2: \b{r}_{3q}\neq-\overline{\b{r}_{3dq}}\}$, $\b{r}_{3q}$ will asymptotically converge to $\overline{\b{r}_{3dq}}$, thus proving the first part of Proposition \ref{prop:quad}. By definition, the almost global asymptotic convergence of $\b{r}_{3q}$ to $\overline{\b{r}_{3dq}}$ directly implies the almost global convergence of $\b{u}$ to $\overline{\b{u}_d}$ (see (\ref{eq:actual_u}) and (\ref{eq:r3dq})). Let us now define the filtering error as $\b{e}_f = \overline{\b{u}_d} - \b{u}_d$ arising from the first-order low-pass filter (\ref{eq:filter}). Consequently, we have that $\b{u}\rightarrow \b{u}_d + \b{e}_f$, with the filtering error dynamics being given by $\dot{\b{e}}_f = -(1/\tau_s)\b{e}_f - \dot{\b{u}}_d$. For a norm bounded time derivative of the desired thrust-vector, i.e,
\begin{equation}
	||\dot{\b{u}}_d(t)||\leq \dot{u}_{d_{max}} = \mathop{\sup}\limits_{t \ge 0} \|\dot{	\b{u}}_d(t)\|\,,
\end{equation}
it follows that
\begin{equation}
	||\b{e}_f(t)|| \leq \dot{u}_{d_{max}}\,\tau_s\,\left(1-e^{-(1/\tau_s)t}\right)\,.
\end{equation}
Therefore, as $t\rightarrow\infty$, the norm of the filtering error will be bounded by $\tau_s\,\dot{u}_{d_{max}}$, implying that $\b{u}$ asymptotically converges to the ball of radius $\tau_s\,\dot{u}_{d_{max}}$ centered around $\b{u}_d$, thus concluding our proof.

\bibliographystyle{IEEEtran}

\bibliography{refs}

\begin{IEEEbiography}[{\includegraphics[width=1in,height=1.25in,clip,keepaspectratio]{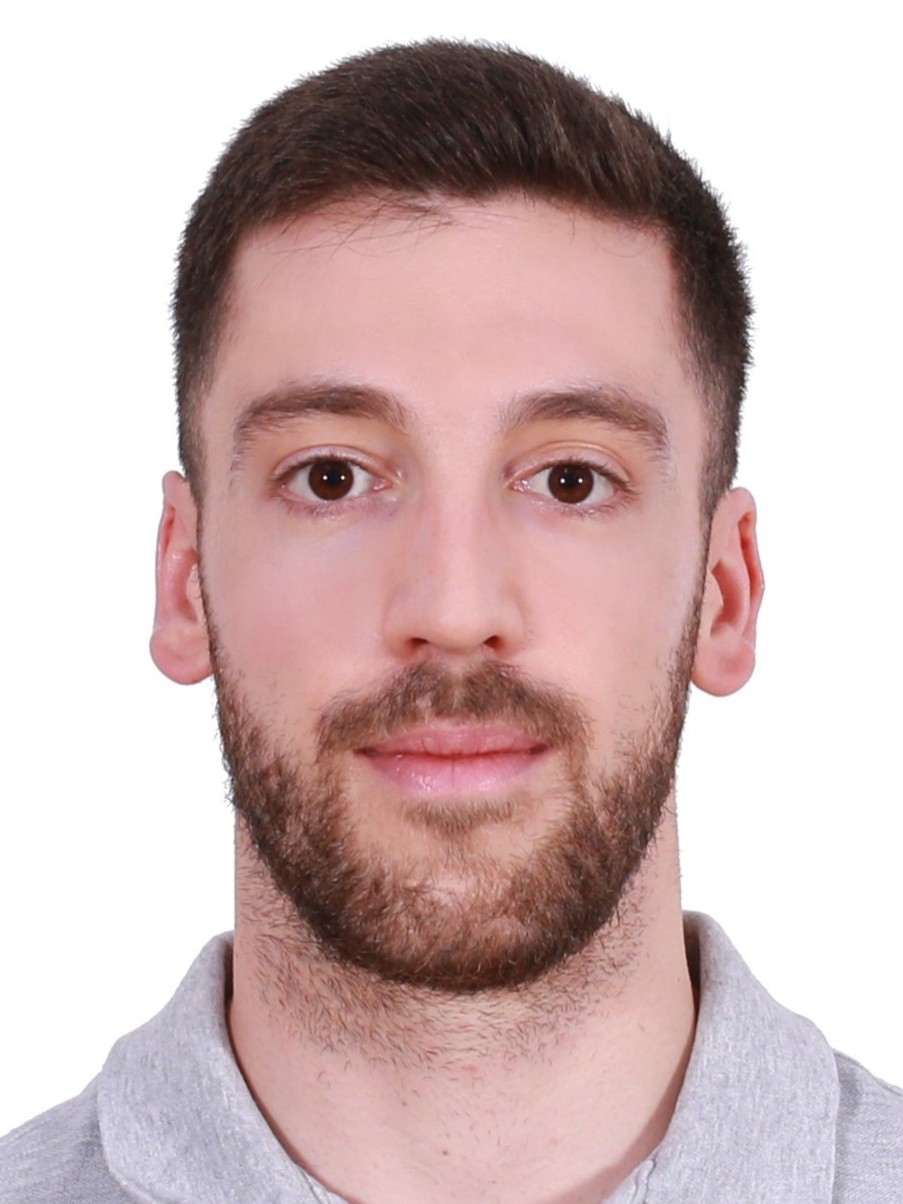}}]{Pedro Santos (Graduate Student Member, IEEE)}
	received his B.Sc. and M.Sc. degrees in Aerospace Engineering from Instituto Superior Técnico (IST), Lisbon, Portugal, in 2020, and 2022, respectively. He is currently pursuing a Ph.D. degree in Aerospace Engineering at IST in affiliation with the Associated Laboratory for Energy, Transports, and Aeronautics. His research interests are in the area of underactuated autonomous vehicles with a focus on nonlinear and robust Guidance, Navigation, and Control (GNC) algorithms for launch vehicles.
\end{IEEEbiography}

\begin{IEEEbiography}[{\includegraphics[width=1in,height=1.25in,clip,keepaspectratio]{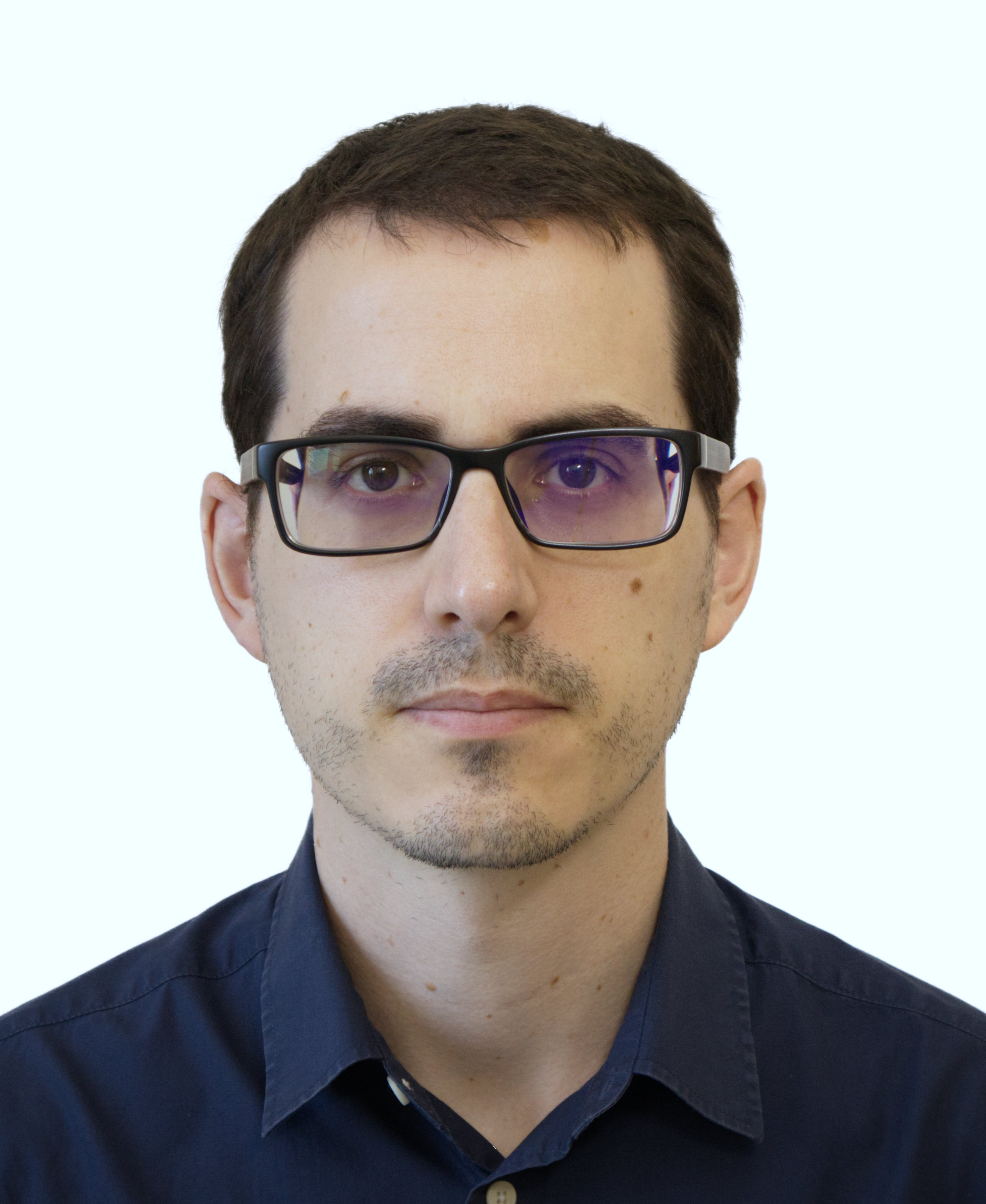}}]{Joel Reis}
	received the M.Sc. degree in Aerospace Engineering from the Instituto Superior Técnico, Lisbon, Portugal, in 2013, and the Ph.D. degree in Electrical and Computer Engineering from the University of Macau, Macau, in 2019. He is currently an Assistant Professor with the Faculty of Science and Technology, University of Macau. His research interests include estimation and control theory for autonomous vehicles.
\end{IEEEbiography}

\begin{IEEEbiography}[{\includegraphics[width=1in,height=1.25in,clip,keepaspectratio]{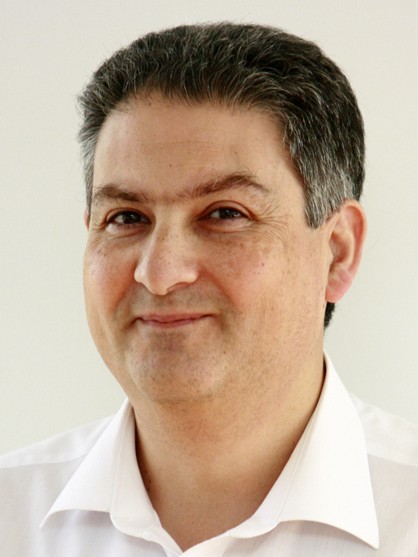}}]{Paulo Oliveira (Senior Member, IEEE)}
	received the Ph.D. degree in Electrical and Computer Engineering, in 2002, respectively, and the Habilitation in Mechanical Engineering in 2016, all from Instituto Superior Técnico (IST), Lisbon, Portugal. Since 2020, he holds a joint position as Full Professor in the Mechanical Engineering and Electrotechnical and Computer Engineering Departments of IST. He is the Coordinator of the Doctoral Program in Aerospace Engineering and member of the Scientific Council, both at IST. His research interests are on Autonomous Robotic Vehicles with a focus on Mechatronic Systems Integration, and Guidance, Navigation and Control Systems (GNC). He is author or coauthor of more than 100 journal and 200 conference papers, and participated in more than 40 European and Portuguese research projects, over the last 35 years. 
\end{IEEEbiography}

\begin{IEEEbiography}[{\includegraphics[width=1in,height=1.25in,clip,keepaspectratio]{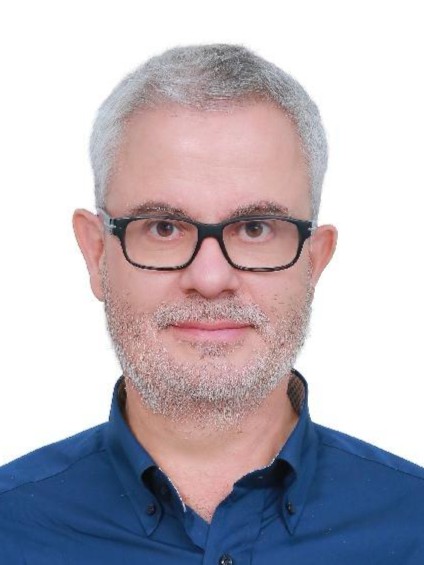}}]{Carlos Silvestre (Senior Member, IEEE)}
	received his Licenciatura and M.Sc. in Electrical and Computer Engineering, his Ph.D. in Control Science, and his Habilitation in Electrical and Computer Engineering from Instituto Superior Técnico (IST), Lisbon, Portugal, in 1987, 1991, 2000, and 2011, respectively. He has been with the Department of Electrical and Computer Engineering at IST since 2000, where he is a Professor in Systems, Decision, and Control, currently on leave since 2012. He is currently a Professor and Head of the Department of Electrical and Computer Engineering at the University of Macau’s Faculty of Science and Technology. His research interests include linear and nonlinear control, estimation theory, hybrid systems, multi-agent control, networked control, inertial navigation, and machine learning for autonomous systems, with a focus on unmanned ocean and aerial vehicles.
\end{IEEEbiography}

\end{document}